\documentclass[11pt]{article}
\usepackage{newtxtext}       
\usepackage{bm}              
\usepackage[a4paper, margin=1in]{geometry}
\usepackage{setspace}
\usepackage{amsmath}
\usepackage{amssymb}
\usepackage{amsfonts}
\usepackage{amsthm}
\usepackage{newtxmath}       
\usepackage{bbm}
\usepackage{algorithm}
\usepackage{algpseudocode}
\usepackage{enumerate}

\usepackage{booktabs}
\usepackage{makecell}
\usepackage{multirow}
\usepackage{array}
\usepackage[table]{xcolor}
\usepackage{siunitx}
\usepackage{graphicx}
\usepackage{subcaption}     
\usepackage{float}
\usepackage{tikz}
\usetikzlibrary{positioning,arrows.meta,fit}

\usepackage{natbib}

\usepackage[
  colorlinks=true,
  linkcolor=black,
  citecolor=black,
  urlcolor=black
]{hyperref}

\usepackage{fancyhdr}
\theoremstyle{plain}
\newtheorem{theorem}{Theorem}[section]
\newtheorem{corollary}[theorem]{Corollary}
\newtheorem{lemma}[theorem]{Lemma}

\theoremstyle{definition}
\newtheorem{assumption}{Assumption}[section]

\newtheorem{remark}[theorem]{Remark}

\numberwithin{equation}{section}

\newcommand{\be}{\begin{eqnarray}}
\newcommand{\ee}{\end{eqnarray}}
\newcommand{\by}{\begin{eqnarray*}}
\newcommand{\ey}{\end{eqnarray*}}
\newcommand{\bn}{\begin{enumerate}}
\newcommand{\en}{\end{enumerate}}
\newcommand{\bi}{\begin{itemize}}
\newcommand{\ei}{\end{itemize}}

\renewcommand{\ge}{\geqslant}
\renewcommand{\le}{\leqslant}
\renewcommand{\geq}{\geqslant}
\renewcommand{\leq}{\leqslant}
\renewcommand{\epsilon}{\varepsilon}
\renewcommand{\cite}{\citet} 

\newcommand{\ignore}[1]{}

\usepackage{booktabs}
\usepackage{tabularx}

\usepackage{enumitem}

\title{\textbf{Generative Distributionally Robust Optimization}}

\author{
    \begin{tabular}{c@{\hspace{2em}}c@{\hspace{2em}}c}
        \large Ziwei Zhang\textsuperscript{1}
        & \large Jonathan Yu-Meng Li\textsuperscript{1}
        & \large Zhihao Jin\textsuperscript{2}\\[0.25em]
        \footnotesize\texttt{zzhan073@uottawa.ca}
        & \footnotesize\texttt{jonathan.li@telfer.uottawa.ca}
        & \footnotesize\texttt{zjin287@uwo.ca}
    \end{tabular}\\[0.9em]
    \normalsize
    \textsuperscript{1}Telfer School of Management, University of Ottawa\\
    \normalsize\textsuperscript{2}Department of Electrical and Computer Engineering,
    Western University
}

\date{July 27, 2026}
\begin{document}

\maketitle
\begin{abstract}
Generative models are increasingly adopted in distributionally robust
optimization (DRO), but existing approaches trade off model
compatibility and adversarial structure: methods that accept arbitrary
samplers do not restrict worst-case laws to a generator family, while
generator-parameterized adversaries rely on model-specific access such
as likelihoods, scores, or training data. We
propose Generative Distributionally Robust Optimization (GDRO), a principled
framework that accepts any sampleable conditional generator as the
nominal model and restricts worst-case laws to a chosen conditional
generator family. The key is the sampler--Sinkhorn pairing: samplers
represent the conditional laws exactly, while Sinkhorn divergence
compares their induced distributions without likelihood access and can
be estimated from samples alone. The resulting population problem
admits a direct finite-sample approximation and differentiable
primal--dual implementation at the active decision context. For
Lipschitz losses, the population Sinkhorn radius bounds downstream
degradation. Across explicit and implicit
generators, our method reduces rare-context inventory regret by 60\%
and SocialGAN navigation collisions by 50\% relative to nominal
decisions.
\end{abstract}

\section{Introduction}
\label{sec:introduction}
\begingroup

Conditional generative models increasingly provide the predictive
distributions used in downstream optimization. Given a current context
$x$, a pretrained model generates demand scenarios, trajectories, or
other uncertain outcomes, and an optimizer chooses a decision from
those samples. This interface is attractive because it accommodates
rich, high-dimensional uncertainty, but it also passes any
misspecification of the generator directly into the decision. The
practical question is therefore not only how to generate realistic
conditional outcomes, but how to robustify a downstream decision when
the available generative model may be wrong.

Distributionally robust optimization (DRO) addresses model
misspecification by optimizing against nearby alternative
distributions~\citep{mohajerin2018wasserstein,rahimian2022frameworks,
kuhn2024distributionally}. Existing approaches, however, face a
tradeoff between \emph{model compatibility} and \emph{adversarial
structure}. A predict-then-robustify approach samples any conditional
generator at the active decision context $x$ and places a Wasserstein
or Sinkhorn ambiguity set around the resulting nominal
law~\citep{mohajerin2018wasserstein,wang2025sinkhorn}. It is
likelihood-free and applies directly at the context where the decision
is made; however, its adversary ranges over an ambient-space transport
ball and is not required to follow the generator's learned structure.
High-dimensional outcomes often concentrate near lower-dimensional
structure---a central premise of modern generative
modeling~\citep{song2019generative}---so worst-case mass may leave the
dependence, dynamics, or manifold structure encoded by the generator.
The transport cost supplies geometry, but it does not impose membership
in the chosen generator family.

Parametric DRO methods~\citep{michel2021modeling,michel2022parametric}
and generative ambiguity-set methods, such as DRO with Generative
Ambiguity Set (GAS-DRO)~\citep{wen2026gasdro} and diffusion
ambiguity-set DRO~\citep{wen2025ddro}, take a complementary approach by
representing the adversarial distribution with a parameterized model.
This preserves a chosen form of parametric or generative structure, but
representative formulations certify the adversary through
model-specific quantities such as likelihood ratios, scores,
reconstruction objectives, or losses evaluated on training data.
Consequently, their stated formulations do not directly provide a
common robustification layer for arbitrary implicit, proprietary, or
otherwise frozen conditional generators available only through
samples. Moreover, a certificate averaged over the training distribution of
contexts---reconstruction loss, for example---need not control the
conditional law at the particular $x$ where a decision is being made.

To address these limitations, we develop a DRO framework that
(i) accepts any sampleable pretrained conditional generator as the
nominal model, (ii) restricts the worst-case law to a chosen conditional
generator family, and (iii) certifies proximity between the two
conditional output laws at the active context. We call the resulting
ambiguity set \emph{generator-faithful}: every admissible worst-case law
is induced by the selected adversarial generator family, rather than
being an arbitrary distribution in the ambient outcome space. When the
nominal architecture is available, a natural choice is to use the same
generator family for both nominal and adversarial models, thereby
preserving its inductive structure. More generally, the framework
accommodates any black-box nominal sampler and permits the adversarial
generator family to be chosen independently.

Our key observation is that the right interface is the
\emph{sampler--Sinkhorn pairing}. Every sampleable conditional
generative model exposes its law operationally through a sampler,
regardless of whether it has a tractable likelihood, score, or density.
Sinkhorn divergence is naturally compatible with this interface: it is
a transport discrepancy between probability laws, can be estimated
from samples alone, and is differentiable with respect to generated
outputs~\citep{cuturi2013sinkhorn,genevay2018sinkhorn,
feydy2019sinkhorn}. It also carries decision-level meaning: under
bounded output support, Sinkhorn proximity controls Wasserstein
proximity, and Kantorovich--Rubinstein duality then bounds the change in
expected loss uniformly over Lipschitz objectives
(Theorem~\ref{thm:decision_alignment}). Sinkhorn divergences have previously been used to
learn generative models from samples; here we use them for a different
purpose---to certify and optimize a worst-case conditional generator
for a downstream decision. The sampler supplies broad model
compatibility, while Sinkhorn supplies a likelihood-free, output-law
transport certificate and a smooth route to adversarial optimization.

\paragraph{Generative distributionally robust optimization (GDRO).}
Let $P_{\hat\phi}(\cdot\mid x)$ denote the conditional law supplied by a
pretrained nominal generator and
$\{Q_\psi(\cdot\mid x):\psi\in\Psi\}$ the conditional laws induced by a
chosen adversarial generator family. Given a downstream loss
$f:\mathcal W\times\mathcal Y\rightarrow\mathbb R$, let
$S_\varepsilon$ denote the debiased Sinkhorn divergence with
regularization $\varepsilon$, and let $\rho$ be the ambiguity radius.
Our central population problem is
\begin{equation}
    w_{\rho,\varepsilon}^{\star}(x)
    \in
    \arg\min_{w\in\mathcal W}\;
    \sup_{\psi\in\Psi:\,
        S_\varepsilon(
            Q_\psi(\cdot\mid x),
            P_{\hat\phi}(\cdot\mid x)
        )\leq\rho}
    \mathbb E_{Y\sim Q_\psi(\cdot\mid x)}
    \!\left[f(w,Y)\right].
    \label{eq:constrained}
\end{equation}
The generator family determines which perturbations are structurally
admissible, Sinkhorn divergence controls their displacement from the
nominal conditional output law, and conditioning throughout on $x$
aligns the stress test with the decision being made. Because both laws
are induced by conditional samplers and Sinkhorn divergence can be
estimated from their output samples, problem~\eqref{eq:constrained}
admits a likelihood-free finite-sample approximation. Section
\ref{sec:method} gives the exact sampler reformulation, its empirical
counterpart, and the resulting optimization algorithm.
\paragraph{Relation to prior work.}
The distinction above positions GDRO between classical
distribution-space DRO, including $\phi$-divergence, Wasserstein, and
Sinkhorn formulations~\citep{mohajerin2018wasserstein,blanchet2019quantifying,
gao2023wasserstein,gao2024regularization,rahimian2022frameworks,
wang2025sinkhorn,yang2025nested}, and structured adversaries based on
parametric likelihood ratios, diffusion ambiguity sets, or
GAS-DRO~\citep{michel2021modeling,michel2022parametric,wen2025ddro,
wen2026gasdro}. FlowDRO~\citep{xu2024flowdro,kobyzev2021normalizing}
uses a normalizing flow to parameterize a distribution-space
Wasserstein adversary, but the Wasserstein ball---rather than membership
in a chosen generator family---still defines which perturbations are
admissible.
Adversarial-environment reinforcement learning~\citep{ren2022dragen}
likewise stress-tests decisions against generated environments, but
perturbs inputs to a fixed simulator. Decision-focused
learning~\citep{donti2017task,elmachtoub2022smart,costa2023portfolio,
wang2025decisionfocused,ma2024drodifferentiable} trains predictive
models for downstream performance, whereas our object is a
post-training ambiguity set around a frozen conditional sampler.
Table~\ref{tab:comparison} isolates the combination that distinguishes
our formulation.

\begin{table}[H]
\centering
\caption{Capabilities native to representative stated formulations.
$\checkmark$: present;\quad $\triangle$: variant-dependent or requiring
model-specific adaptation;\quad $\times$: absent.}
\label{tab:comparison}
\scriptsize
\setlength{\tabcolsep}{3.2pt}
\begin{tabular}{lcccc}
\toprule
Method
  & \makecell{Arbitrary frozen\\nominal sampler}
  & \makecell{Generator-family\\adversary}
  & \makecell{Output-law\\transport}
  & \makecell{Active-context\\certificate} \\
\midrule
\makecell[l]{Conditional Wasserstein /\\Sinkhorn DRO}
  & $\checkmark$ & $\times$ & $\checkmark$ & $\checkmark$ \\
\makecell[l]{Parametric adversary DRO\\
\citep{michel2021modeling,michel2022parametric}}
  & $\times$ & $\triangle$ & $\times$ & $\triangle$ \\
\makecell[l]{Diffusion ambiguity-set DRO / GAS-DRO\\
\citep{wen2025ddro,wen2026gasdro}}
  & $\times$ & $\checkmark$ & $\times$ & $\triangle$ \\
\textbf{GDRO (ours)}
  & $\checkmark$ & $\checkmark$ & $\checkmark$ & $\checkmark$ \\
\bottomrule
\end{tabular}
\end{table}

\paragraph{Contributions.}
\begin{itemize}
  \item We introduce Generative Distributionally Robust Optimization (GDRO),
        a context-local ambiguity framework that combines an arbitrary
        sampleable nominal conditional generator with a
        generator-faithful adversarial family. To our knowledge, this is
        the first context-local DRO framework to restrict worst-case
        laws to a chosen conditional generator family and certify them
        by sample-based Sinkhorn proximity to an arbitrary frozen
        conditional sampler (Section~\ref{sec:method}).
  \item We establish the exact sampler representation of the
        population problem, derive its finite-sample approximation, and
        develop a differentiable primal--dual implementation. For
        Lipschitz losses, the population Sinkhorn radius controls
        downstream degradation, and we analyze convergence of the
        resulting optimization procedure (Section~\ref{sec:theory}).
  \item Across explicit and implicit generators, the resulting robust
        decisions reduce rare-context inventory regret by $60\%$ and
        SocialGAN navigation collisions by $50\%$ relative to nominal
        decisions (Section~\ref{sec:experiments}).
\end{itemize}

\endgroup

\begingroup

\section{Generative Distributionally Robust Optimization (GDRO)}
\label{sec:method}

\subsection{Conditional Population Formulation}
\label{subsec:population_formulation}

Let $\mathcal X$ be the context space, $\mathcal Y$ the outcome space,
$\mathcal W\subseteq\mathbb R^d$ the decision set, and
$f:\mathcal W\times\mathcal Y\to\mathbb R$ the downstream loss. We fix
the active decision context $x\in\mathcal X$ throughout. Write
\[
    P_0^x:=P_{\hat\phi}(\cdot\mid x)
    \qquad\text{and}\qquad
    Q_\psi^x:=Q_\psi(\cdot\mid x),\quad \psi\in\Psi,
\]
for the nominal conditional law and a conditional law in the chosen
adversarial generator family, respectively.

For $\varepsilon>0$, probability measures $P,Q$ on $\mathcal Y$, and
a ground cost $c:\mathcal Y\times\mathcal Y\to\mathbb R_+$,
entropically regularized optimal transport is
\begin{equation}
    \operatorname{OT}_\varepsilon(P,Q)
    :=
    \inf_{\pi\in\Pi(P,Q)}
    \left\{
        \int c(y,y')\,\mathrm d\pi(y,y')
        +\varepsilon\,
        \operatorname{KL}\!\left(\pi\,\middle\|\,P\otimes Q\right)
    \right\},
    \label{eq:population_entropic_ot}
\end{equation}
where $\Pi(P,Q)$ contains all couplings with marginals $P$ and $Q$.
The debiased Sinkhorn divergence~\citep{feydy2019sinkhorn} is
\begin{equation}
    S_\varepsilon(P,Q)
    :=
    \operatorname{OT}_\varepsilon(P,Q)
    -\tfrac12\operatorname{OT}_\varepsilon(P,P)
    -\tfrac12\operatorname{OT}_\varepsilon(Q,Q).
    \label{eq:population_sinkhorn}
\end{equation}
By construction, $S_\varepsilon(P,P)=0$. For boundedly supported laws
on a Euclidean outcome space and the squared-Euclidean cost
$c(y,y')=\|y-y'\|_2^2$, $S_\varepsilon(P,Q)\geq0$ and
$S_\varepsilon(P,Q)\to W_2^2(P,Q)$ as
$\varepsilon\downarrow0$~\citep{feydy2019sinkhorn,
peyre2019computational}.

For an ambiguity radius $\rho\geq0$, the conditional population problem
is
\begin{equation}
    V_{\rho,\varepsilon}(x)
    :=
    \inf_{w\in\mathcal W}
    \sup_{\substack{\psi\in\Psi:\\
        S_\varepsilon(Q_\psi^x,P_0^x)\leq\rho}}
    \mathbb E_{Y\sim Q_\psi^x}[f(w,Y)].
    \label{eq:population_gdro}
\end{equation}
The feasible laws in~\eqref{eq:population_gdro} are
\emph{generator-faithful}: every one is induced by the selected
conditional generator family. The natural choice, when the nominal
architecture is available, is to use the same conditional architecture
for the adversary and include the nominal parameter in $\Psi$; then the
ambiguity set is nonempty for every $\rho\geq0$. A different adversarial
family is also permitted, provided it contains at least one conditional
law within radius $\rho$ of $P_0^x$.

\subsection{Sampler Representation and Empirical Approximation}
\label{subsec:finite_sample}

Let the nominal and adversarial conditional samplers be
\[
    G_{\hat\phi}:\mathcal Z_0\times\mathcal X\to\mathcal Y,
    \quad Z_0\sim\zeta_0,
    \qquad
    G_\psi:\mathcal Z_{\mathrm A}\times\mathcal X\to\mathcal Y,
    \quad Z_{\mathrm A}\sim\zeta_{\mathrm A}.
\]
Their conditional output laws are the pushforwards
\begin{equation}
    P_0^x=\bigl(G_{\hat\phi}(\cdot,x)\bigr)_\#\zeta_0,
    \qquad
    Q_\psi^x=\bigl(G_\psi(\cdot,x)\bigr)_\#\zeta_{\mathrm A}.
    \label{eq:conditional_pushforwards}
\end{equation}
In the natural same-architecture case, the samplers may share the same
latent space and base law. The notation in
\eqref{eq:conditional_pushforwards} also covers a black-box nominal
sampler and a separately chosen adversarial family.

Substituting~\eqref{eq:conditional_pushforwards} into
\eqref{eq:population_gdro} gives the exact sampler representation
\begin{equation}
    V_{\rho,\varepsilon}(x)
    =
    \inf_{w\in\mathcal W}
    \sup_{\substack{\psi\in\Psi:\\
        S_\varepsilon(
            (G_\psi(\cdot,x))_\#\zeta_{\mathrm A},
            (G_{\hat\phi}(\cdot,x))_\#\zeta_0
        )\leq\rho}}
    \mathbb E_{Z_{\mathrm A}\sim\zeta_{\mathrm A}}
    \!\left[f\!\left(w,G_\psi(Z_{\mathrm A},x)\right)\right].
    \label{eq:sampler_population}
\end{equation}
Because the Sinkhorn constraint compares the induced output laws, this
representation requires no likelihoods, scores, or pointwise pairing
of generated outputs.

To obtain the finite-sample problem, draw
\[
    Y_{0,i}\stackrel{\mathrm{iid}}{\sim}P_0^x,
    \qquad
    Z_{{\mathrm A},i}\stackrel{\mathrm{iid}}{\sim}\zeta_{\mathrm A},
    \quad i=1,\dots,M.
\]
They define the empirical conditional output laws
\begin{equation}
    \widehat P_{G_{\hat\phi},x}
    :=
    \frac1M\sum_{i=1}^M\delta_{Y_{0,i}},
    \qquad
    \widehat P_{G_\psi,x}
    :=
    \frac1M\sum_{i=1}^M
    \delta_{G_\psi(Z_{{\mathrm A},i},x)}.
    \label{eq:empirical_conditional_laws}
\end{equation}
The nominal and adversarial batches may be drawn independently. The
adversarial latent draws are held fixed across $\psi$ during the
sample-average optimization.

For empirical measures
$\alpha=M^{-1}\sum_i\delta_{a_i}$ and
$\beta=M^{-1}\sum_j\delta_{b_j}$, let
\[
    \Pi_M
    :=
    \left\{
        \pi\in\mathbb R_+^{M\times M}:
        \pi\mathbf 1_M=M^{-1}\mathbf 1_M,\;
        \pi^\top\mathbf 1_M=M^{-1}\mathbf 1_M
    \right\}.
\]
The empirical regularized transport term is
\begin{equation}
    \operatorname{OT}_\varepsilon(\alpha,\beta)
    =
    \min_{\pi\in\Pi_M}
    \left\{
        \sum_{i,j=1}^M\pi_{ij}c(a_i,b_j)
        +
        \varepsilon\sum_{i,j=1}^M
        \pi_{ij}\log(M^2\pi_{ij})
    \right\},
    \label{eq:discrete_entropic_ot}
\end{equation}
with the convention $0\log0=0$. The two empirical self-transport terms
in~\eqref{eq:population_sinkhorn} are computed analogously. Since
$\sum_{i,j}\pi_{ij}=1$, the regularizer in
\eqref{eq:discrete_entropic_ot} equals
$\varepsilon\sum_{i,j}\pi_{ij}\log\pi_{ij}
+2\varepsilon\log M$. The additive constant appears in each of the
three transport terms in~\eqref{eq:population_sinkhorn} and therefore
cancels from $S_\varepsilon$. Hence the empirical Sinkhorn divergence
is unchanged under the common entropy convention
$\varepsilon\sum_{i,j}\pi_{ij}\log\pi_{ij}$.

The empirical objective is
\begin{equation}
    \widehat F_M(w,\psi)
    :=
    \frac1M\sum_{i=1}^M
    f\!\left(w,G_\psi(Z_{{\mathrm A},i},x)\right),
    \label{eq:saa_obj}
\end{equation}
and the finite-sample GDRO problem is
\begin{equation}
    \widehat V_M(x)
    :=
    \inf_{w\in\mathcal W}
    \sup_{\substack{\psi\in\Psi:\\
        S_\varepsilon(
            \widehat P_{G_\psi,x},
            \widehat P_{G_{\hat\phi},x}
        )\leq\rho}}
    \widehat F_M(w,\psi).
    \label{eq:constrained_method}
\end{equation}

\subsection{Primal--Dual Optimization}
\label{subsec:primal_dual}

For fixed $w$, set
\[
    \widehat s(\psi)
    :=
    S_\varepsilon(
        \widehat P_{G_\psi,x},
        \widehat P_{G_{\hat\phi},x}).
\]
The empirical inner problem is
\begin{equation}
    \sup_{\psi\in\Psi}\widehat F_M(w,\psi)
    \quad\text{subject to}\quad
    \widehat s(\psi)\leq\rho.
    \label{eq:inner}
\end{equation}
For a nonlinear generator this problem is generally nonconcave and its
feasible set has no closed-form projection. Its Lagrangian is
\begin{equation}
    \widehat{\mathcal L}(w,\psi,\mu)
    :=
    \widehat F_M(w,\psi)
    -\mu\bigl(\widehat s(\psi)-\rho\bigr),
    \qquad \mu\geq0.
    \label{eq:lagrangian}
\end{equation}
The associated dual problem minimizes over $\mu\geq0$ after maximizing
over $\psi$. Accordingly, we ascend in $\psi$ and use projected dual
descent,
\[
    \psi\leftarrow
    \psi+\eta_\psi\nabla_\psi\widehat{\mathcal L},
    \qquad
    \mu\leftarrow
    \left[
        \mu+\eta_\mu
        \bigl(\widehat s(\psi)-\rho\bigr)
    \right]_+.
\]
The multiplier increases when $\widehat s(\psi)>\rho$ and decreases
otherwise. In computing $\nabla_\psi\widehat s(\psi)$, the nominal
samples are fixed; only the adversarial samples
$G_\psi(Z_{{\mathrm A},i},x)$ depend on $\psi$. For
$\varepsilon>0$ and a smooth ground cost, the empirical Sinkhorn
divergence is differentiable in these generated samples, so the chain
rule propagates the gradient through $G_\psi$.

\begin{algorithm}[H]
\captionsetup{labelsep=colon}
\caption{Primal--Dual GDRO}
\label{alg:GDRO}
\begin{algorithmic}[1]
\Require Context $x$; nominal sampling oracle
         $Y\sim P_0(\cdot\mid x)$; adversarial sampler $G_\psi$;
         radius $\rho$; batch size $M$; step sizes
         $\eta_\psi,\eta_\mu,\eta_w$; iteration counts $J,K,T$.
\State Draw the nominal and adversarial batches in
       \eqref{eq:empirical_conditional_laws}; form
       $\widehat P_{G_{\hat\phi},x}$.
\State Initialize $w\in\mathcal W$, $\psi\in\Psi$, and choose
       $\mu^{(1)}>0$.
\For{$t=1,\dots,T$}
    \State $w^{(t)}\leftarrow w$; \quad $\mu\leftarrow\mu^{(1)}$.
    \For{$k=1,\dots,K$}
        \For{$j=1,\dots,J$}
            \State Form $\widehat P_{G_\psi,x}$ from the fixed
                   adversarial latent draws.
            \State $\psi\leftarrow
                   \psi+\eta_\psi\nabla_\psi
                   \widehat{\mathcal L}(w,\psi,\mu)$.
        \EndFor
        \State Set $\psi_k\leftarrow\psi$ and evaluate
               $\widehat s(\psi_k)$.
        \State $\mu\leftarrow
               [\mu+\eta_\mu(
               \widehat s(\psi_k)-\rho)]_+$.
        \State $\widehat g_k\leftarrow
               \nabla_w\widehat F_M(w,\psi_k)$.
    \EndFor
    \State $w\leftarrow
           \Pi_{\mathcal W}\!\left(
           w-\eta_w K^{-1}\sum_{k=1}^K\widehat g_k\right)$.
\EndFor
\State \Return $\overline w=T^{-1}\sum_{t=1}^T w^{(t)}$.
\end{algorithmic}
\end{algorithm}

\paragraph{Model access and scope.}
Algorithm~\ref{alg:GDRO} queries the nominal conditional model only for
output samples, which may be cached; it requires no likelihood, score,
or gradient access to that model and therefore accommodates implicit or
non-differentiable nominal generators. The gradient implementation does
require an optimizable adversarial family, typically one whose outputs
are differentiable with respect to $\psi$. Because the empirical inner
problem is nonconcave, the algorithm generally targets an approximate
stationary solution rather than a certified global maximizer.

\endgroup

\section{Theoretical Analysis}
\label{sec:theory}

We establish four guarantees for Algorithm~\ref{alg:GDRO}, all at a
fixed context $x$: a bound on the Sinkhorn ambiguity
(Lemma~\ref{lem:sinkhorn_bound}); a decision-alignment result linking
the radius $\rho$ to downstream loss
(Theorem~\ref{thm:decision_alignment}); convergence of the inner
adversarial maximization (Theorem~\ref{thm:inner_convergence}); and
joint convergence of the alternating procedure under outer convexity
(Theorem~\ref{thm:joint_convergence}). Proofs are deferred to
Appendix~\ref{app:proofs}.

\subsection{Setup}
\label{sec:setup}

\paragraph{Standing assumptions.}
Throughout, the context $x$ is fixed and:
\begin{itemize}[leftmargin=2.2em,itemsep=1pt,topsep=2pt]
\item[(A1)] $f(w,y)$ is $L_f$-Lipschitz in $y$, uniformly in $w$;
\item[(A2)] $\mathcal{W}\subset\mathbb{R}^d$ is convex and compact with
diameter $D_{\mathcal{W}}$;
\item[(A3)] {nominal and adversarial generator outputs are
uniformly bounded: $\|G_{\hat\phi}(z_0,x)\|_2\le R$ for
$\zeta_0$-almost every $z_0$, and $\|G_\psi(z_{\mathrm A},x)\|_2\le R$
for every $\psi\in\Psi$ and $\zeta_{\mathrm A}$-almost every
$z_{\mathrm A}$.}
\end{itemize}
All Sinkhorn and Wasserstein quantities in this section use the
squared-Euclidean ground cost $c(y,y')=\|y-y'\|_2^2$. Convexity of
$f$ in $w$ is assumed only for joint convergence
(Assumption~\ref{ass:outer_convex}). We take $\varepsilon>0$ and
$\rho>0$ throughout.

\paragraph{Notation.}
At the population level, retain the notation $P_0^x$ and $Q_\psi^x$
from Section~\ref{subsec:population_formulation}. For the optimization
analysis, condition on the batches in
\eqref{eq:empirical_conditional_laws} and abbreviate
\[
    F(w,\psi):=\widehat F_M(w,\psi),\qquad
    F_0(w):=\frac1M\sum_{i=1}^M f(w,Y_{0,i}),
\]
\[
    s(\psi):=S_\varepsilon(\widehat P_{G_\psi,x},
    \widehat P_{G_{\hat\phi},x}),\qquad
    \mathcal A:=\{\psi\in\Psi:s(\psi)\le\rho\}.
\]
Let $\phi(w):=\max_{\psi\in\mathcal A}F(w,\psi)$,
$\phi^*:=\min_{w\in\mathcal W}\phi(w)$, and
$\psi^*(w)\in\arg\max_{\psi\in\mathcal A}F(w,\psi)$. We assume these
optima are attained and $\mathcal A$ is nonempty. Finally, write
$s_k:=s(\psi_k)$, ${\gamma_M}:=\varepsilon\log M$,
$B:=\max\{\rho,4R^2\}$, and let $\mu^{(1)}>0$ be the multiplier at the
start of each inner block.

\paragraph{Sinkhorn--Wasserstein closeness.}
{
For the two equally weighted $M$-point empirical laws used by the
algorithm, the comparison is explicit:
\begin{equation}
    \bigl|S_\varepsilon(\alpha,\beta)-W_2^2(\alpha,\beta)\bigr|
    \le {\gamma_M}=\varepsilon\log M.
    \label{eq:empirical_sw_gap}
\end{equation}
Indeed, an optimal unregularized permutation coupling has relative
entropy $\log M$, and the same bound applies to both self-transport
terms. A proof is given in Appendix~\ref{app:proof_empirical_sw_gap}.
Unlike a generic asymptotic interpolation-gap statement,
\eqref{eq:empirical_sw_gap} gives a uniform finite-sample comparison
over generated empirical clouds, with an explicit error term
$\gamma_M=\varepsilon\log M$.
}
\subsection{Sinkhorn Bound and Decision Alignment}
\label{subsec:decision_alignment}

We first bound the Sinkhorn divergence between any adversarial conditional law and the nominal conditional
law. This bound is the one problem-specific ingredient
of the convergence analysis; everything downstream of it is standard.

\begin{lemma}[Uniform Sinkhorn upper bound]
\label{lem:sinkhorn_bound}
{Under~\textup{(A3)}, any two nominal or adversarial output
laws $P,Q$ satisfy}
\[
    {0\le S_\varepsilon(P,Q)\le 4R^2.}
\]
{In particular, $0\le s(\psi)\le4R^2$ for every
$\psi\in\Psi$.}
\end{lemma}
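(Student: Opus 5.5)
The plan is to treat the lower and upper bounds separately. Both follow from the support bound in (A3): all nominal and adversarial outputs lie in the closed ball of radius $R$, so $c(y,y')=\|y-y'\|_2^2\le 4R^2$ on the supports.

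\textbf{Lower bound.} Nonnegativity $S_\varepsilon(P,Q)\ge 0$ is the result of \citet{feydy2019sinkhorn} already quoted in Section~\ref{subsec:population_formulation}. It applies to boundedly supported laws with squared-Euclidean cost: that cost gives a positive-definite kernel $e^{-c/\varepsilon}$ on bounded sets, and this positivity is what drives the proof. By (A3), every law in question is supported in the ball of radius $R$, so the cited result applies directly. The empirical laws $\widehat P_{G_\psi,x}$ and $\widehat P_{G_{\hat\phi},x}$ are also supported in that ball (almost surely), so the same argument gives $s(\psi)\ge0$.

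\textbf{Upper bound.} I would bound the three terms of \eqref{eq:population_sinkhorn} separately.

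1. Bound the cross term from above. Take the independent coupling $\pi=P\otimes Q$ in \eqref{eq:population_entropic_ot}. Its KL term vanishes, so
\[
\operatorname{OT}_\varepsilon(P,Q)\le \int\|y-y'\|^2\,\mathrm d(P\otimes Q)\le 4R^2.
\]

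2. Bound the self-transport terms from below. Since $c\ge0$ and KL $\ge0$, we have $\operatorname{OT}_\varepsilon(P,P)\ge0$ and $\operatorname{OT}_\varepsilon(Q,Q)\ge0$.

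3. Combine. Subtracting the nonnegative self terms only decreases the cross term, so $S_\varepsilon(P,Q)\le \operatorname{OT}_\varepsilon(P,Q)\le 4R^2$.

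4. Pass to the empirical case. Use the discrete formulation \eqref{eq:discrete_entropic_ot} with the uniform coupling $\pi_{ij}=M^{-2}$. Its regularizer is $\varepsilon\sum_{ij}\pi_{ij}\log(M^2\pi_{ij})=0$, and the cost term is at most $4R^2$. The self terms in this convention are $\ge0$, because $\sum\pi_{ij}\log(M^2\pi_{ij})$ is a KL divergence to the uniform product measure and hence nonnegative. So $s(\psi)\le4R^2$ for all $\psi\in\Psi$.

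\textbf{Main obstacle.} The only delicate point is nonnegativity. A self-contained proof is nontrivial; it needs the dual or kernel-positivity argument of Feydy et al. I would cite it rather than reprove it. If a fully self-contained argument were required, I would reproduce their proof via the symmetric Sinkhorn potentials and the positive definiteness of the Gaussian kernel $e^{-\|y-y'\|^2/\varepsilon}$.

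A related point of care: nonnegativity must be invoked under the same entropy convention used in the definition. The upper bound, by contrast, is elementary. I would also note that the empirical \eqref{eq:empirical_sw_gap} is not needed here, since it would only give the weaker bound $4R^2+\gamma_M$.
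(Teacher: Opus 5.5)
Your proposal is correct and follows essentially the same route as the paper: the independent coupling $P\otimes Q$ has zero KL penalty and transport cost at most $4R^2$, subtracting the two nonnegative self-transport terms can only lower the value, and nonnegativity is cited from Feydy et al. Your separate empirical step with the uniform coupling $\pi_{ij}=M^{-2}$ is the same product-coupling argument applied to the empirical laws, which the paper leaves implicit by treating them as output laws supported in the radius-$R$ ball.
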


\begin{proof}[Proof sketch]
{
The product coupling $P\otimes Q$ has zero relative-entropy penalty
and transport cost at most $4R^2$. Thus
$\operatorname{OT}_\varepsilon(P,Q)\le4R^2$; subtracting the two
nonnegative self-transport terms cannot increase the result.
Nonnegativity is the standard positivity property of the debiased
Sinkhorn divergence. This argument does not require a common latent
space or paired draws. Full proof in
Appendix~\ref{app:proof_sinkhorn_bound}.
}
\end{proof}

The next result formalizes the central claim of the introduction: the
radius $\rho$ is a budget on downstream performance.

{
Let $\mathcal P_R$ be the probability laws supported on the closed
Euclidean ball of radius $R$, and define
\begin{equation}
    \omega_{\varepsilon,R}(r)
    :=\sup\{W_1(P,Q):P,Q\in\mathcal P_R,
    \ S_\varepsilon(P,Q)\le r\}.
    \label{eq:sinkhorn_modulus}
\end{equation}

\begin{theorem}[Population decision alignment]
\label{thm:decision_alignment}
The function $\omega_{\varepsilon,R}$ is finite and nondecreasing, and
$\omega_{\varepsilon,R}(r)\downarrow0$ as $r\downarrow0$. Under
\textup{(A1)--(A3)}, every $\psi\in\Psi$ satisfying
$S_\varepsilon(Q_\psi^x,P_0^x)\le\rho$ obeys
\[
    \left|\mathbb E_{Q_\psi^x}[f(w,Y)]
    -\mathbb E_{P_0^x}[f(w,Y)]\right|
    \le L_f\,\omega_{\varepsilon,R}(\rho)
    \qquad\text{for every }w\in\mathcal W.
\]
\end{theorem}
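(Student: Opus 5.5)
We prove the three properties of $\omega_{\varepsilon,R}$ first and then derive the loss bound from Kantorovich--Rubinstein duality.

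\emph{Finiteness and monotonicity.} Any $P,Q\in\mathcal P_R$ satisfy $W_1(P,Q)\le 2R$, since every coupling moves mass a distance at most $2R$. Hence $\omega_{\varepsilon,R}(r)\le 2R$ for every $r$. The feasible set $\{(P,Q)\in\mathcal P_R^2:S_\varepsilon(P,Q)\le r\}$ contains the diagonal pairs $(P,P)$, because $S_\varepsilon(P,P)=0$. It is therefore nonempty, and it grows with $r$. So $\omega_{\varepsilon,R}$ is finite, nondecreasing, and nonnegative.

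\emph{Limit at zero.} We argue by contradiction using compactness. Suppose $\omega_{\varepsilon,R}(r)\not\to0$. Then there exist $\delta>0$ and pairs $(P_n,Q_n)\in\mathcal P_R^2$ with $S_\varepsilon(P_n,Q_n)\le 1/n$ and $W_1(P_n,Q_n)\ge\delta$.

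\begin{itemize}
\item By Prokhorov's theorem, $\mathcal P_R$ is weakly compact because the ball is compact. Passing to a subsequence, $P_n\to P$ and $Q_n\to Q$ weakly.
\item On a compact set, weak convergence is equivalent to $W_1$ convergence. Hence $W_1(P,Q)\ge\delta$, and in particular $P\neq Q$.
\item We then need continuity of $S_\varepsilon$ under weak convergence on compact supports. This holds for $\varepsilon>0$ and continuous cost; it is the result of \citep{feydy2019sinkhorn}, obtained via uniform equicontinuity of the Sinkhorn potentials. It gives $S_\varepsilon(P,Q)=\lim_n S_\varepsilon(P_n,Q_n)=0$.
\item Finally, the debiased divergence is definite on compactly supported laws when the kernel $e^{-\|y-y'\|^2/\varepsilon}$ is universal/positive definite, again by \citep{feydy2019sinkhorn}. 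So $S_\varepsilon(P,Q)=0$ forces $P=Q$, which is a contradiction.
\end{itemize}

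Since $\omega_{\varepsilon,R}$ is monotone, this shows $\omega_{\varepsilon,R}(r)\downarrow0$ as $r\downarrow0$.

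\emph{Decision bound.} Fix $w$ and a feasible $\psi$. By (A3), both $Q_\psi^x$ and $P_0^x$ lie in $\mathcal P_R$, so $W_1(Q_\psi^x,P_0^x)\le\omega_{\varepsilon,R}(\rho)$ by the definition of $\omega_{\varepsilon,R}$. By (A1), $y\mapsto f(w,y)/L_f$ is $1$-Lipschitz. Kantorovich--Rubinstein duality then gives
\[
\bigl|\mathbb E_{Q_\psi^x}f(w,Y)-\mathbb E_{P_0^x}f(w,Y)\bigr|\le L_f W_1(Q_\psi^x,P_0^x)\le L_f\,\omega_{\varepsilon,R}(\rho).
\]
The bound is uniform in $w$ because $L_f$ is.

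\emph{Main obstacle.} The limit $\omega_{\varepsilon,R}(r)\to0$ is the hard part, since it rests on two nontrivial properties of $S_\varepsilon$: weak continuity and definiteness. I would cite both from \citep{feydy2019sinkhorn} rather than reprove them, and check that the squared-Euclidean cost on a ball meets their hypotheses. Their hypotheses require a Lipschitz cost on a compact domain and a positive universal Gibbs kernel; the Gaussian kernel $e^{-\|y-y'\|^2/\varepsilon}$ satisfies the latter.

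A quantitative alternative is to bound $W_1$ by a kernel MMD, which $S_\varepsilon$ dominates up to constants, but this is more delicate. A naive route through \eqref{eq:empirical_sw_gap} does not give a modulus vanishing at zero for population laws, so the compactness argument is the one to use.
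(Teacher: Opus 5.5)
Your proposal is correct and follows essentially the same route as the paper: weak compactness of $\mathcal P_R$, continuity and definiteness of $S_\varepsilon$ (Feydy et al.) together with continuity of $W_1$ to obtain $\omega_{\varepsilon,R}(r)\downarrow0$ by contradiction, and then Kantorovich--Rubinstein duality for the loss bound. The only addition is that you spell out finiteness ($\omega_{\varepsilon,R}\le 2R$) and monotonicity of $\omega_{\varepsilon,R}$, which the paper leaves implicit.
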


\begin{proof}[Proof sketch]
On the compact space $\mathcal P_R$, Sinkhorn divergence is continuous,
positive definite, and metrizes weak convergence; $W_1$ is also
continuous. Compactness therefore implies
$\omega_{\varepsilon,R}(r)\downarrow0$. The loss bound then follows
from Kantorovich--Rubinstein duality. Full proof in
Appendix~\ref{app:proof_decision_alignment}.
\end{proof}

\begin{corollary}[Empirical decision alignment]
\label{cor:empirical_decision_alignment}
Under~\textup{(A1)--(A3)}, every $\psi\in\mathcal A$ satisfies
\[
    |F(w,\psi)-F_0(w)|
    \le L_f\sqrt{\rho+{\gamma_M}}
    \qquad\text{for every }w\in\mathcal W.
\]
\end{corollary}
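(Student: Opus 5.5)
The plan is to work at the level of the two equally weighted $M$-point empirical laws $\widehat P_{G_\psi,x}$ and $\widehat P_{G_{\hat\phi},x}$, and to chain three comparisons: Sinkhorn to $W_2^2$, $W_2$ to $W_1$, and $W_1$ to the difference of expected losses. Fix $\psi\in\mathcal A$ and $w\in\mathcal W$. By definition of $\mathcal A$,
\[
s(\psi)=S_\varepsilon(\widehat P_{G_\psi,x},\widehat P_{G_{\hat\phi},x})\le\rho .
\]
The empirical Sinkhorn--Wasserstein comparison \eqref{eq:empirical_sw_gap} applies directly, because both measures are uniform on $M$ atoms. It gives
\[
W_2^2(\widehat P_{G_\psi,x},\widehat P_{G_{\hat\phi},x})\le s(\psi)+\gamma_M\le\rho+\gamma_M .
\]
The right-hand side is nonnegative, since $\rho>0$ and $\gamma_M\ge0$ for $M\ge1$, so the square root is well defined.

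Next, I would pass from $W_2$ to $W_1$. For probability measures, Jensen's (or Cauchy--Schwarz) inequality applied to any coupling gives $W_1\le W_2$. To keep things concrete, I would take an optimal permutation coupling $\sigma$ for $W_2$; such a coupling exists by Birkhoff's theorem for uniform empirical laws of equal size. Then
\[
\frac1M\sum_i\|G_\psi(Z_{\mathrm A,i},x)-Y_{0,\sigma(i)}\|_2\le\Bigl(\frac1M\sum_i\|G_\psi(Z_{\mathrm A,i},x)-Y_{0,\sigma(i)}\|_2^2\Bigr)^{1/2}=W_2 .
\]

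Finally, I would use (A1). Since $F(w,\psi)$ and $F_0(w)$ are the expectations of $f(w,\cdot)$ under the two empirical laws, reindexing $F_0$ via $\sigma$ gives
\[
|F(w,\psi)-F_0(w)|\le\frac1M\sum_i\bigl|f(w,G_\psi(Z_{\mathrm A,i},x))-f(w,Y_{0,\sigma(i)})\bigr|\le L_f\,\frac1M\sum_i\|G_\psi(Z_{\mathrm A,i},x)-Y_{0,\sigma(i)}\|_2 .
\]
Combining the three steps yields $L_f\sqrt{\rho+\gamma_M}$. This bound holds uniformly in $w$, because the Lipschitz constant in (A1) is uniform in $w$. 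Kantorovich--Rubinstein duality could replace the explicit coupling, but the coupling argument is more self-contained.

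The only nontrivial ingredient is \eqref{eq:empirical_sw_gap}, and that is stated earlier (proved in the appendix), so I expect no real obstacle here. The one point to check is the direction of that gap. Only the inequality $W_2^2\le S_\varepsilon+\gamma_M$ is needed. Its plausibility rests on two facts. First, $\mathrm{OT}_\varepsilon\ge W_2^2$, since the regularized objective adds a nonnegative KL term to the transport cost. Second, each self-term $\mathrm{OT}_\varepsilon(\alpha,\alpha)$ is at most $\varepsilon\log M$, via the identity permutation coupling, whose cost is zero and whose relative entropy is $\log M$. Together these give
\[
S_\varepsilon\ge W_2^2-\tfrac12\gamma_M-\tfrac12\gamma_M=W_2^2-\gamma_M ,
\]
which confirms the needed one-sided bound.
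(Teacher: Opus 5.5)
Your proposal is correct and follows the paper's proof. Both arguments use the same chain: the empirical Sinkhorn--Wasserstein comparison gives $W_2^2\le s(\psi)+\gamma_M\le\rho+\gamma_M$, then $W_1\le W_2$, then the Lipschitz bound from (A1). The only difference is cosmetic: you bound the loss gap with an explicit optimal permutation coupling instead of citing Kantorovich--Rubinstein duality, which amounts to using only the elementary direction of that duality.
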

}

\begin{remark}[Significance]
The radius $\rho$ is not an abstract divergence budget but a direct,
loss-uniform bound on the downstream loss any adversary in
$\mathcal{A}$ can extract, holding simultaneously for every Lipschitz
objective. This makes $\rho$ an interpretable design parameter.
\end{remark}

\subsection{Inner Convergence}
\label{subsec:inner_convergence}

For fixed $w$, the inner problem is a nonconcave maximization over a
nonconvex feasible set, solved by the primal--dual scheme of
Algorithm~\ref{alg:GDRO}: at each of $K$ dual steps, $J$ ascent steps
approximately solve $\max_\psi\mathcal{L}(w,\psi,\mu_k)$, followed by a
projected update of $\mu_k$. The next result bounds the suboptimality in
objective value, averaged over the dual iterates.

{
Because gradient ascent need not find a global maximizer of a
nonconcave Lagrangian, the guarantee is stated under an explicit inner
oracle condition:
\begin{equation}
    \psi_k\in\arg\max_{\psi\in\Psi}
    \widehat{\mathcal L}(w,\psi,\mu_k).
    \label{eq:exact_inner_oracle}
\end{equation}
An approximate oracle adds its average Lagrangian error to the bound
below; in particular, an average error of order $K^{-1/2}$ preserves
the stated rate.
}

\begin{theorem}[Inner convergence]
\label{thm:inner_convergence}
{Under~\textup{(A1)--(A3)} and
\eqref{eq:exact_inner_oracle}, use projected dual descent with
$\eta_\mu=\mu^{(1)}/(B\sqrt K)$. Then}
\[
    F\bigl(w,\psi^*(w)\bigr)
    - \frac{1}{K}\sum_{k=1}^{K} F(w,\psi_k)
    \;\le\;
    {\frac{B\,\mu^{(1)}}{\sqrt{K}},}
\]
where $\psi^*(w)\in\arg\max_{\psi\in\mathcal{A}}F(w,\psi)$.
\end{theorem}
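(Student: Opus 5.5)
The plan is the standard Lagrangian-duality argument for projected dual subgradient descent. The exact oracle \eqref{eq:exact_inner_oracle} converts each primal iterate into a per-step inequality involving the constraint violation $g_k:=s_k-\rho$. A telescoping bound on $\mu_k^2$ then controls the average of the products $\mu_k g_k$. Lemma~\ref{lem:sinkhorn_bound} supplies the only problem-specific ingredient, a uniform bound on $|g_k|$.

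\emph{Step 1 (per-step optimality inequality).} Fix $w$ and write $\psi^*:=\psi^*(w)$, so that $\psi^*\in\mathcal A$ and hence $s(\psi^*)\le\rho$. Since $\psi_k$ maximizes $\widehat{\mathcal L}(w,\cdot,\mu_k)$ over all of $\Psi\supseteq\mathcal A$,
\[
F(w,\psi_k)-\mu_k g_k=\widehat{\mathcal L}(w,\psi_k,\mu_k)\ge\widehat{\mathcal L}(w,\psi^*,\mu_k)=F(w,\psi^*)-\mu_k\bigl(s(\psi^*)-\rho\bigr)\ge F(w,\psi^*).
\]
The last inequality uses $\mu_k\ge0$ and feasibility of $\psi^*$. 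Hence $F(w,\psi^*)-F(w,\psi_k)\le-\mu_k g_k$. It therefore suffices to bound $-\frac1K\sum_k\mu_k g_k$ from above.

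\emph{Step 2 (telescoping the dual recursion).} The update is $\mu_{k+1}=[\mu_k+\eta_\mu g_k]_+$. Because $0\le\mu_{k+1}\le|\mu_k+\eta_\mu g_k|$, we have
\[
\mu_{k+1}^2\le\mu_k^2+2\eta_\mu\mu_k g_k+\eta_\mu^2g_k^2,
\]
and rearranging gives $-\mu_k g_k\le(\mu_k^2-\mu_{k+1}^2)/(2\eta_\mu)+\eta_\mu g_k^2/2$. Summing over $k=1,\dots,K$ and dropping $-\mu_{K+1}^2\le0$ yields
\[
-\sum_{k}\mu_kg_k\le\frac{(\mu^{(1)})^2}{2\eta_\mu}+\frac{\eta_\mu}{2}\sum_k g_k^2 .
\]
Here $\mu_1=\mu^{(1)}$ because the multiplier is reset at the start of each inner block.

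\emph{Step 3 (bounding the violations and choosing the step).} By Lemma~\ref{lem:sinkhorn_bound}, $0\le s_k\le4R^2$. Since $\rho>0$, this gives $g_k\in[-\rho,4R^2]$, so $|g_k|\le\max\{\rho,4R^2\}=B$. Dividing by $K$ gives
\[
-\frac1K\sum_k\mu_kg_k\le\frac{(\mu^{(1)})^2}{2\eta_\mu K}+\frac{\eta_\mu B^2}{2}.
\]
Substituting $\eta_\mu=\mu^{(1)}/(B\sqrt K)$, each term equals $B\mu^{(1)}/(2\sqrt K)$, so the sum is $B\mu^{(1)}/\sqrt K$. Combining with Step~1 gives the claimed bound.

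For the remark on approximate oracles, Step~1 would acquire an additive error $\delta_k$ on the right-hand side, contributing $\frac1K\sum_k\delta_k$ to the final bound. The argument is routine, and I expect the only delicate points to be the following. First, the sign bookkeeping in Step~1 must use the feasibility of $\psi^*$ rather than of $\psi_k$, since the $\psi_k$ may be infeasible. Second, the bound $|g_k|\le B$ requires both the upper bound and the nonnegativity in Lemma~\ref{lem:sinkhorn_bound}; the nonnegativity is what controls negative violations by $\rho$.
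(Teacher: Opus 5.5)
Your proof is correct and follows the paper's route. Both use the exact-oracle/weak-duality step $F(w,\psi^*(w))\le F(w,\psi_k)+\mu_k(\rho-s_k)$, the telescoping bound for projected dual descent (you derive it directly for reference multiplier $0$, while the paper states it for an arbitrary reference in \eqref{eq:projected_dual_regret}), and $|s_k-\rho|\le B$ from Lemma~\ref{lem:sinkhorn_bound} with the prescribed $\eta_\mu$.
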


The proof is the standard dual-subgradient analysis of constrained
saddle-point problems~\citep{zinkevich2003online,nedic2009subgradient}:
a {projected-descent} regret bound on the dual variable,
into which the
Sinkhorn bound (Lemma~\ref{lem:sinkhorn_bound}) enters only through the
constant $4R^2$. See Appendix~\ref{app:proof_inner}.

\subsection{Joint Convergence}
\label{subsec:joint_convergence}

The full algorithm alternates the inner primal--dual updates with an
outer projected step along the averaged subgradient
$\hat g^{(t)}=\frac1K\sum_{k=1}^{K}\hat g_k$, $\hat g_k\in\partial_w
F(w^{(t)},\psi_k)$. Coupling the two layers requires convexity of the
downstream loss in $w$.

\begin{assumption}[Outer convexity]
\label{ass:outer_convex}
$f(w,y)$ is convex in $w$ for every $y$, and {the outer
subgradients used by the algorithm are assumed uniformly bounded,}
$\|\hat g_k\|\le G_w$.
\end{assumption}

{
For fixed $w$, let
\[
    Q_w(\mu):=\sup_{\psi\in\Psi}
    \{F(w,\psi)+\mu(\rho-s(\psi))\}.
\]
\begin{assumption}[Dual regularity]
\label{ass:dual_regularity}
For every $w\in\mathcal W$, $Q_w$ has a selected minimizer
$\mu^*(w)\ge0$ satisfying
$\overline\mu^*:=\sup_{w\in\mathcal W}\mu^*(w)<\infty$.
\end{assumption}
}

The inner iterates need not be feasible; the next lemma controls their
mean constraint violation using the standard long-term-constraint
argument~\citep{mahdavi2012trading}, with an explicit comparison
constant $C_K$ derived as in GAS-DRO~\citep{wen2026gasdro}.

\begin{lemma}[Average constraint slack]
\label{lem:slack}
Under the conditions of Theorem~\ref{thm:inner_convergence}
{and Assumption~\ref{ass:dual_regularity}},
\[
    \frac{1}{K}\sum_{k=1}^{K} s_k \;\le\; \rho + \frac{C_K}{\sqrt{K}},
    \qquad {
    C_K = \frac{\max\{\rho,4R^2\}}{\mu_C-\overline\mu^*}
    \Bigl(\tfrac{\mu^{(1)}}{2}+\tfrac{|\mu_C-\mu^{(1)}|^2}{2\mu^{(1)}}\Bigr),
    }
\]
{where $\mu_C>\overline\mu^*$ is fixed.}
\end{lemma}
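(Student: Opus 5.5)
The plan is to run the standard long-term-constraint argument for projected dual descent on $\mu$, but with the comparator chosen as a fixed multiplier $\mu_C>\overline\mu^*$ instead of $0$. Write $g_k:=\rho-s_k$, so the dual step is $\mu_{k+1}=[\mu_k-\eta_\mu g_k]_+$. By Lemma~\ref{lem:sinkhorn_bound}, $0\le s_k\le4R^2$, so $|g_k|\le\max\{\rho,4R^2\}=B$.

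\textbf{Step 1 (dual regret).} The usual expansion $\|\mu_{k+1}-\mu_C\|^2\le\|\mu_k-\mu_C\|^2-2\eta_\mu g_k(\mu_k-\mu_C)+\eta_\mu^2g_k^2$, telescoped from $\mu_1=\mu^{(1)}$, gives
\[
\sum_{k=1}^K(\mu_k-\mu_C)\,g_k\le\frac{|\mu^{(1)}-\mu_C|^2}{2\eta_\mu}+\frac{\eta_\mu KB^2}{2}.
\]
With $\eta_\mu=\mu^{(1)}/(B\sqrt K)$ the right-hand side equals $B\sqrt K\bigl(\tfrac{|\mu_C-\mu^{(1)}|^2}{2\mu^{(1)}}+\tfrac{\mu^{(1)}}{2}\bigr)$. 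This is exactly $(\mu_C-\overline\mu^*)\sqrt K\,C_K$, which matches the constant in the statement.

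\textbf{Step 2 (oracle plus duality).} By \eqref{eq:exact_inner_oracle}, $F(w,\psi_k)+\mu_k g_k=Q_w(\mu_k)\ge Q_w(\mu^*(w))$. Weak duality gives $Q_w(\mu^*)\ge\sup_{\psi\in\mathcal A}F(w,\psi)=F(w,\psi^*(w))$. For the reverse direction I need strong duality, $F(w,\psi^*(w))=Q_w(\mu^*(w))$, so that $\mu^*$ is a genuine Lagrange multiplier. I read Assumption~\ref{ass:dual_regularity} as supplying this: a selected dual minimizer with zero duality gap.

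Then, for each $k$,
\[
F(w,\psi^*)\ge F(w,\psi_k)+\mu^*g_k.
\]
This holds because $Q_w(\mu^*)\ge F(w,\psi_k)+\mu^*g_k$ by the definition of the supremum. Subtracting it from the oracle identity yields $\mu_kg_k\ge Q_w(\mu_k)-F(w,\psi_k)\ge \mu^* g_k+\bigl(Q_w(\mu_k)-Q_w(\mu^*)\bigr)\ge\mu^*g_k$. Summing and combining with Step 1 gives
\[
(\mu_C-\mu^*)\sum_k(-g_k)\le\sum_k(\mu_k-\mu_C)g_k-\sum_k(\mu_k-\mu^*)g_k\le\sum_k(\mu_k-\mu_C)g_k.
\]
Since $\mu_C-\mu^*(w)\ge\mu_C-\overline\mu^*>0$, dividing by $K(\mu_C-\overline\mu^*)$ bounds $\frac1K\sum_k(s_k-\rho)$ by $C_K/\sqrt K$. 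When $\sum_k(s_k-\rho)\le0$ the claim is trivial, so replacing $\mu_C-\mu^*$ by the smaller $\mu_C-\overline\mu^*$ only loosens the bound in the relevant case.

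\textbf{Main obstacle.} The delicate step is justifying $F(w,\psi^*)=Q_w(\mu^*)$, since the inner problem is nonconcave and strong duality is not automatic. If Assumption~\ref{ass:dual_regularity} is read only as existence of a dual minimizer, the fallback is to restate $\mu^*$ via the saddle-type inequality $F(w,\psi)+\mu^* (\rho-s(\psi))\le F(w,\psi^*)$ for all $\psi$, which is precisely what Step 2 uses. I would make this explicit as the meaning of ``selected minimizer'', mirroring the GAS-DRO comparison argument.
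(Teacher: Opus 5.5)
Your argument is correct and is essentially the paper's proof. Both combine the projected-dual regret bound against the fixed comparator $\mu_C$ with the lower bound $\mu_k(\rho-s_k)\ge\mu^*(w)(\rho-s_k)$, which comes from the oracle identity, from the minimality $Q_w(\mu_k)\ge Q_w(\mu^*(w))$, and from evaluating the supremum that defines $Q_w(\mu^*(w))$ at the candidate $\psi_k$.

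The ``main obstacle'' you flag does not exist. The chain of inequalities you actually write never uses $F(w,\psi^*(w))=Q_w(\mu^*(w))$. You should therefore drop the strong-duality detour and the saddle-point reinterpretation of Assumption~\ref{ass:dual_regularity}: in this nonconcave setting that reading would be a genuinely stronger hypothesis, whereas existence of a bounded dual minimizer is all the proof needs.
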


\begin{theorem}[Joint convergence]
\label{thm:joint_convergence}
Under~\textup{(A1)--(A3)}, Assumptions~\ref{ass:outer_convex}
and~\ref{ass:dual_regularity}, and the inner-oracle condition
\eqref{eq:exact_inner_oracle}, run Algorithm~\ref{alg:GDRO} for
$T$ outer iterations with $K$ inner steps and outer step size
$\eta_w=D_{\mathcal{W}}/(G_w\sqrt{T})$. Then the averaged iterate
$\bar w=T^{-1}\sum_{t=1}^{T}w^{(t)}$ satisfies
\begin{align*}
    \phi(\bar w)-\phi^*
    \;\le\;
    &\underbrace{\frac{D_{\mathcal W}G_w}{\sqrt T}}_{\textnormal{outer}}
    +
    \underbrace{\frac{B\mu^{(1)}}{\sqrt K}}_{\textnormal{inner}}
    +
    \underbrace{\frac{L_f C_K}{2\sqrt{(\rho+\gamma_M)K}}}_{\textnormal{slack}}
    \\[2pt]
    &+
    \underbrace{2L_f\sqrt{\rho+\gamma_M}}_{\textnormal{comparison residual}} .
\end{align*}
\end{theorem}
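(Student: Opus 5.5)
We follow the standard online-to-batch template for projected subgradient descent on a convex envelope, with the inner and slack errors carried as additive perturbations. The key step is to sandwich $\phi$ between the averaged inner value and a quantity controlled by the nominal objective $F_0$.

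\emph{Step 1: outer regret.} For each $t$ set $\bar F_t(w):=K^{-1}\sum_{k=1}^K F(w,\psi_k^{(t)})$, where $\psi_k^{(t)}$ are the inner iterates of block $t$. By Assumption~\ref{ass:outer_convex}, $\bar F_t$ is convex and $\hat g^{(t)}\in\partial\bar F_t(w^{(t)})$ with $\|\hat g^{(t)}\|\le G_w$. The usual projected-subgradient argument with $\eta_w=D_{\mathcal W}/(G_w\sqrt T)$ gives, for any $u\in\mathcal W$,
\[
    \frac1T\sum_{t=1}^T\bigl(\bar F_t(w^{(t)})-\bar F_t(u)\bigr)\le \frac{D_{\mathcal W}G_w}{\sqrt T}.
\]
We take $u=w^\star\in\arg\min_{\mathcal W}\phi$.

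\emph{Step 2: relating $\bar F_t$ to $\phi$ from both sides.}

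\emph{Lower side.} Theorem~\ref{thm:inner_convergence} gives
\[
    \phi(w^{(t)})=F(w^{(t)},\psi^*(w^{(t)}))\le \bar F_t(w^{(t)})+B\mu^{(1)}/\sqrt K.
\]

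\emph{Upper side.} We need $\bar F_t(w^\star)\lesssim\phi(w^\star)$ up to the slack and residual. The iterates $\psi_k$ may be infeasible, but Lemma~\ref{lem:slack} bounds their average divergence by $\rho+C_K/\sqrt K$. Since $W_2^2\le s_k+\gamma_M$ by \eqref{eq:empirical_sw_gap}, we get $|F(w,\psi_k)-F_0(w)|\le L_f\sqrt{s_k+\gamma_M}$ (the mechanism of Corollary~\ref{cor:empirical_decision_alignment}, using $W_1\le W_2$ and Kantorovich--Rubinstein). Averaging and applying Jensen to the concave square root,
\[
    \bar F_t(w)\le F_0(w)+L_f\sqrt{\rho+\gamma_M+C_K/\sqrt K}.
\]
Concavity of the square root, $\sqrt{a+h}\le\sqrt a+h/(2\sqrt a)$, then yields
\[
    \bar F_t(w)\le F_0(w)+L_f\sqrt{\rho+\gamma_M}+\frac{L_fC_K}{2\sqrt{(\rho+\gamma_M)K}},
\]
which is exactly the slack term. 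Finally, by Corollary~\ref{cor:empirical_decision_alignment} applied to a feasible $\psi$ (nonempty $\mathcal A$), $F_0(w)\le\phi(w)+L_f\sqrt{\rho+\gamma_M}$. Combining the two inequalities gives the upper side with a total residual of $2L_f\sqrt{\rho+\gamma_M}$.

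\emph{Step 3: assembling.} For $\phi(\bar w)$, note that $\phi$ is a pointwise maximum of functions convex in $w$, hence convex, so Jensen gives $\phi(\bar w)\le T^{-1}\sum_t\phi(w^{(t)})$. Chaining the lower side, the regret bound with $u=w^\star$, and the upper side at $w^\star$ produces the four stated terms, with $\phi(w^\star)=\phi^*$.

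\emph{Main obstacle.} The delicate point is the upper comparison: the infeasible average $\bar F_t(w^\star)$ is only controlled through the nominal objective, which is why the residual $2L_f\sqrt{\rho+\gamma_M}$ does not vanish. I would also check that Lemma~\ref{lem:slack} really applies at each outer iterate uniformly. Assumption~\ref{ass:dual_regularity} makes $\overline\mu^*$, and hence $C_K$, independent of $w$, so this should go through. The multiplier reset $\mu\leftarrow\mu^{(1)}$ at the start of each block keeps the constants identical across $t$.
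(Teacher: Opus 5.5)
Your proposal is correct and follows essentially the same route as the paper. The paper uses Theorem~\ref{thm:inner_convergence} for the lower side, and Lemma~\ref{lem:slack} with Jensen and $\sqrt{a+b}\le\sqrt{a}+b/(2\sqrt{a})$ for the slack term. It obtains the residual $2L_f\sqrt{\rho+\gamma_M}$ from a two-leg comparison through the nominal empirical law, then finishes with projected-subgradient telescoping and Jensen on the convex $\phi$. The only cosmetic difference is that you pass through $F_0(w^\star)$ and an arbitrary feasible $\psi$. The paper instead applies the $W_1$ triangle inequality directly between $\widehat P_{G_{\psi_k},x}$ and $\widehat P_{G_{\psi^*(w^*)},x}$, and both give identical constants.
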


\begin{remark}[Interpretation]
The first three terms are \emph{optimization errors}: outer projected
subgradient descent, inner Lagrangian approximation, and accumulated
average slack. These terms vanish as $T,K\to\infty$; for instance,
taking $K=T^2$ makes the inner and slack terms $O(1/T)$, leaving the
dominant optimization rate $O(1/\sqrt{T})$. The last term,
$2L_f\sqrt{\rho+\gamma_M}$, is a \emph{comparison residual} in the
present proof. It has the same scale as the empirical
decision-alignment bound in
Corollary~\ref{cor:empirical_decision_alignment} and arises from
bounding the loss gap through Kantorovich--Rubinstein duality and the
nominal empirical law. This term is independent of $T$ and $K$ in our
analysis; we do not claim that it is intrinsic to the algorithm.
\end{remark}

%

\section{Experiments}
\label{sec:experiments}

\paragraph{Overview.}
We evaluate GDRO on two downstream decision problems.  The first is
contextual newsvendor, where the decision is convex and the optimal
order for a fixed scenario distribution has a weighted-quantile form.
We study this task on a synthetic benchmark with controlled rare
conditional tails and on the real M5 retail demand dataset
\citep{makridakis2022m5}.  The second downstream problem is robot
navigation among pedestrians, where the decision problem is nonconvex
and the nominal model is the official implicit SocialGAN trajectory
generator \citep{gupta2018social}.  These two settings test GDRO under
both explicit conditional generators and implicit black-box samplers.

\paragraph{Baselines.}
For all tasks, \textsc{Nominal} optimizes the downstream decision using
samples from the fitted nominal generator.  \textsc{KL} performs
finite-support KL reweighting over nominal samples, and therefore
cannot create new outcomes outside the sampled support.  \textsc{W2}
uses a finite-sample Wasserstein-2 perturbation of the sampled outcomes.
For the newsvendor experiments, where the nominal model has an explicit
conditional decoder, we also compare with \textsc{GAS-DRO}.  For the
synthetic benchmark only, we report an \textsc{Oracle} that optimizes
using samples from the true data-generating process.  Our method,
\textsc{GDRO}, optimizes against adversarial generators constrained by a
context-local Sinkhorn certificate around the nominal generator.  Full
implementation details and hyperparameters are given in
Appendix~\ref{app:experiments}.

\paragraph{Metrics.}
For newsvendor, the primary decision-quality metrics are average
realized cost and, when an oracle is available, regret relative to the
oracle. Since the asymmetric newsvendor loss already encodes the
underage--overage trade-off, lower tail losses or stockout rates alone
do not necessarily indicate a better policy: they can also be obtained
by systematically over-ordering. We therefore report Q95 loss,
CVaR$_{95}$, stockout rate, and average order quantity as diagnostic
metrics that reveal the risk--conservatism trade-off. For robot
navigation, we report collision count, near-miss count, arrival count,
evaluation loss, minimum realized pedestrian distance, and terminal
goal error.

\subsection{Contextual Newsvendor}

\paragraph{Synthetic rare-tail benchmark.}
The synthetic benchmark is designed so that most contexts follow a
regular demand pattern, while rare contexts may activate a shifted tail
component.  This stresses a common failure mode of nominal generative
models: the fitted generator can match the bulk of the conditional
distribution while underrepresenting rare high-cost outcomes.  The
downstream decision is a multi-product order vector, and the realized
loss is the asymmetric newsvendor cost.

Table~\ref{tab:synthetic_newsvendor_main} reports results separately on
frequent and rare contexts.  On frequent contexts, GDRO obtains the
lowest average cost and regret among non-oracle methods while keeping
the average order close to the oracle.  On rare contexts, GDRO again
achieves the lowest non-oracle regret, improving substantially over the
nominal policy, KL reweighting, W2, and GAS-DRO. KL achieves lower tail losses and stockout rates in some slices, but it
does so with larger average orders and substantially higher average
cost. This illustrates the risk--conservatism trade-off in newsvendor:
tail-risk reductions are meaningful only when considered together with
realized cost and order inflation. This supports the
central mechanism of GDRO: rather than only reweighting nominal samples
or freely perturbing support points, it searches over nearby
generator-induced conditional laws that expose consequential tail
scenarios.

\begin{table}[t]
\centering
\caption{Synthetic contextual newsvendor results. Avg. cost and regret
are the primary decision-quality metrics. Q95 loss, CVaR$_{95}$,
stockout, and average order are diagnostic metrics for the
risk--conservatism trade-off.}
\label{tab:synthetic_newsvendor_main}
\begin{tabular}{llcccccc}
\toprule
\textbf{Slice} & \textbf{Method}
& \textbf{Avg. cost}
& \textbf{Regret}
& \textbf{Q95 loss}
& \textbf{CVaR$_{95}$}
& \textbf{Stockout}
& \textbf{Avg. order} \\
\midrule
\multirow{6}{*}{Frequent}
 & Oracle        & 422.99 & 0.00  & 801.89 & 1260.96 & 0.084 & 25.09 \\
 & Nominal       & 436.55 & 13.55 & 947.01 & 1429.04 & 0.104 & 24.35 \\
 & KL            & 451.43 & 28.44 & 754.84 & 1136.56 & 0.073 & 26.34 \\
 & W2            & 436.73 & 13.74 & 846.07 & 1292.49 & 0.088 & 25.08 \\
 & GAS-DRO       & 431.21 & 8.22  & 889.10 & 1358.03 & 0.094 & 24.68 \\
 & \textbf{GDRO} & \textbf{429.62} & \textbf{6.63} & 809.49 & 1257.77 & 0.084 & 25.16 \\
\midrule
\multirow{6}{*}{Rare}
 & Oracle        & 514.46 & 0.00  & 967.81  & 1799.44 & 0.084 & 28.35 \\
 & Nominal       & 537.09 & 22.63 & 1291.28 & 2259.68 & 0.124 & 26.50 \\
 & KL            & 541.09 & 26.63 & 1074.20 & 1903.99 & 0.101 & 28.25 \\
 & W2            & 532.88 & 18.41 & 1205.79 & 2128.83 & 0.117 & 27.03 \\
 & GAS-DRO       & 530.36 & 15.89 & 1236.14 & 2190.00 & 0.118 & 26.75 \\
 & \textbf{GDRO} & \textbf{523.47} & \textbf{9.01} & 1148.88 & 2064.01 & 0.107 & 27.26 \\
\bottomrule
\end{tabular}
\end{table}

\paragraph{M5 retail demand.}
We next apply the same newsvendor pipeline to the M5 retail demand
dataset. We select 20 products with the fewest zero-demand observations
to make conditional scenario generation stable. For each day $t$, the
context is a rolling window of the previous 10 days of demand, and the
nominal CVAE--LSTM generator produces conditional scenarios for the
next-day demand $Y_t$. The downstream decision is the order vector for
that day, and all methods are evaluated out of sample on held-out
rolling test windows. Unlike the synthetic benchmark, the true
data-generating distribution is unknown, so performance is measured
only against realized held-out demand. The robustness radius and
inner-loop budget are adapted using a historical average-cost risk
score, as described in Appendix~\ref{app:m5}.

GDRO achieves the lowest average realized cost on the M5 test
set. It also improves the moderate-tail metrics Q$_{90}$ and
CVaR$_{90}$ relative to the nominal policy while keeping the average
order close to W2. KL and GAS-DRO reduce some extreme-tail or stockout
metrics, but mainly by placing larger orders; in particular, GAS-DRO
attains the lowest stockout rate with a much larger average order and
the highest average cost. These results indicate that GDRO
improves the primary decision objective without relying on excessive
over-ordering.

\begin{table*}[t]
\centering
\caption{Performance comparison on the held-out M5 contextual
newsvendor test set. Average realized cost is the primary
decision-quality metric; tail losses, stockout, and average order
diagnose the risk--conservatism trade-off.}
\label{tab:m5_newsvendor_main}
\resizebox{\textwidth}{!}{
\begin{tabular}{lrrrrrrr}
\toprule
Method
& Avg. cost
& Q$_{90}$
& Q$_{95}$
& CVaR$_{90}$
& CVaR$_{95}$
& Stockout
& Avg. order \\
\midrule
Nominal
& 1230.30
& 1729.04
& 2634.58
& 2772.97
& 3440.73
& 0.1088
& 64.64 \\

KL
& 1459.00
& 1951.36
& 2326.20
& 2607.80
& 3101.56
& 0.1070
& 71.33 \\

W2
& 1212.91
& 1654.29
& 2400.24
& 2541.50
& 3093.66
& 0.0940
& 66.39 \\

GAS-DRO
& 1712.35
& 2510.35
& 2592.51
& 2705.35
& 2859.42
& 0.0193
& 86.44 \\

\textbf{GDRO}
& \textbf{1203.02}
& 1575.37
& 2354.47
& 2454.90
& 3035.12
& 0.0840
& 67.07 \\
\bottomrule
\end{tabular}
}
\end{table*}

\subsection{Pedestrian-Conditioned Robot Navigation with SocialGAN}

\paragraph{Setup.}
To test GDRO with an implicit high-dimensional generator, we use the
official pretrained SocialGAN model \citep{gupta2018social}.  Given
8 observed frames of pedestrian motion, SocialGAN generates samples of
the next 12 frames.  We add a downstream robot navigation decision on
top of this prediction task: the robot chooses a 12-step velocity plan
to move from a sampled start point to a goal while avoiding pedestrians. During optimization, the robot only observes the past trajectory and SocialGAN-generated futures; the true future is used only for
evaluation. In evaluation, a collision is counted when the robot comes within $0.5$ meters of any realized pedestrian trajectory, a near miss is counted using a larger $0.8$-meter buffer, and arrival means that the
robot reaches within $0.25$ meters of the goal within the planning horizon.

We construct 200 fixed decision tasks from 50 SocialGAN test contexts,
with four start--goal queries per context.  Queries are selected using
only nominal-sample information so that the straight-line robot path has
nontrivial predicted collision risk, while the start and goal are not
initially occupied.  This creates difficult but feasible navigation
instances rather than trivially safe cases.  Details of query
construction, loss weights, adaptive radii, and optimization budgets are
given in Appendix~\ref{app:socialgan}.

\paragraph{Results.}
Figure~\ref{fig:sgan_mechanism} shows the qualitative mechanism on a
real SocialGAN test scene. Nominal and KL both collide with the
realized pedestrian future: Nominal trusts the baseline forecast, while
KL can only reweight the same nominal support. GDRO avoids the
pedestrian while still reaching the goal. In contrast, W2 perturbs
support points directly in trajectory space, which can move adversarial
futures off the SocialGAN manifold and break the learned
trajectory-interaction structure. The resulting robot plan is more
distorted and not necessarily safer under realized evaluation.

\begin{figure}[t]
\centering
\includegraphics[width=\linewidth]{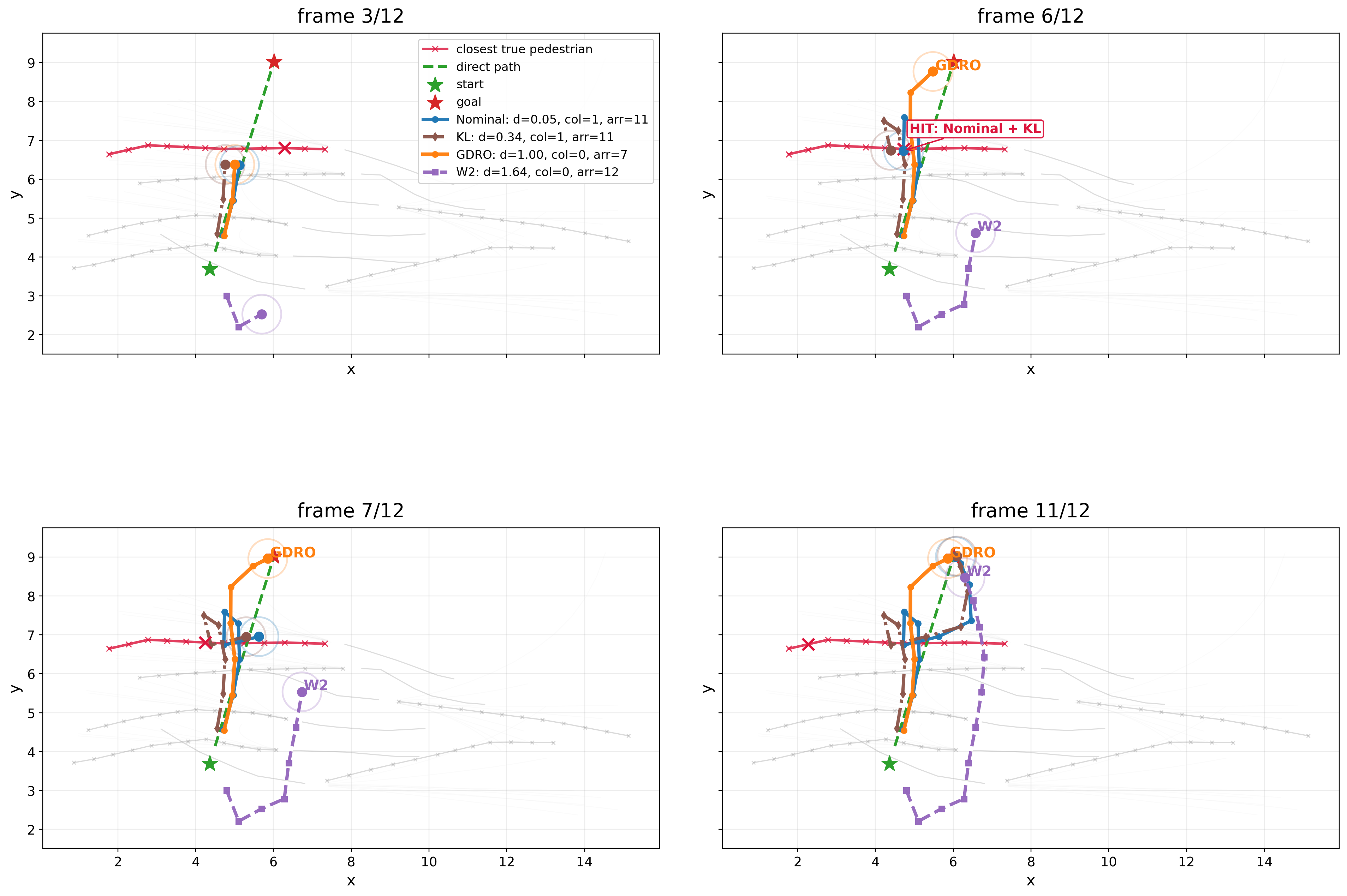}
\caption{
Mechanism example on a real SocialGAN test scene. Nominal and KL plans
collide with the realized pedestrian future. GDRO avoids the pedestrian
while still reaching the goal, whereas W2 produces a distorted route
after unconstrained output-space perturbation.
}
\label{fig:sgan_mechanism}
\end{figure}

Table~\ref{tab:sgan_robot_main} confirms this pattern across 200 fixed
test tasks. A collision is counted at $0.5$m, a near miss at $0.8$m,
and arrival means reaching within $0.25$m of the goal. GDRO reduces
collisions from $22/200$ to $11/200$ and near misses from $106/200$ to
$55/200$, while still arriving in $195/200$ tasks. W2 achieves a larger
average minimum distance, but has more collisions, a much lower arrival
rate, and a larger terminal gap, showing that conservatism in the wrong
geometry can be harmful.

\begin{table}[t]
\centering
\caption{SocialGAN robot navigation on 200 fixed test tasks.
Collisions are counted at $0.5$m, near misses at $0.8$m, and arrivals
at a $0.25$m goal threshold.}
\label{tab:sgan_robot_main}
\begin{tabular}{lrrrrrr}
\toprule
Method & Collisions & Near misses & Arrivals & Eval. loss & Min dist. & Terminal gap \\
\midrule
Nominal & 22/200 & 106/200 & 200/200 & 35.48 & 0.800 & 0.010 \\
KL      & 14/200 & 90/200  & 200/200 & 29.84 & 0.851 & 0.016 \\
\textbf{GDRO} & \textbf{11/200} & \textbf{55/200} & 195/200 & \textbf{26.24} & 0.925 & 0.035 \\
W2      & 34/200 & 72/200  & 142/200 & 56.60 & 1.017 & 0.277 \\
\bottomrule
\end{tabular}
\end{table}

\paragraph{Takeaway.}
Across newsvendor and SocialGAN navigation, GDRO improves downstream
robustness by stress-testing decisions through structured
generator-level perturbations. The results show that robustness is not
only about being more conservative, but about being conservative in the
right geometry: GDRO preserves the simulator's learned structure,
whereas free output-space perturbations can produce misleading stress
tests.


\section{Conclusion}
\label{sec:conclusion}

We proposed GDRO, a simulation-based distributionally robust
optimization framework for decision-making with learned conditional
generators. The core idea is to define robustness directly on generated
samples at the active decision context. Instead of relying on
likelihood ratios, reconstruction certificates, or global
model-level discrepancies, GDRO compares nominal and adversarial output
clouds through a sample-based Sinkhorn certificate. This makes the
ambiguity set context-local, transport-based, and applicable even when
the nominal model is only available as a sampler.

Theoretically, we showed that the Sinkhorn radius admits a downstream
loss interpretation for Lipschitz objectives and proved a finite-time
guarantee for the fixed empirical objective when the outer decision
layer is convex, as in the newsvendor setting. Empirically, GDRO
improves the primary decision objective in contextual newsvendor tasks
and reduces realized collision risk in SocialGAN navigation, showing
that the same sample-based framework can also be applied beyond convex
decision problems to implicit pretrained generators. Together, these
results highlight the central mechanism of GDRO: robustness is not
merely about being more conservative, but about stress-testing the
simulator in the right geometry. By constraining adversarial scenarios
through generator-induced output laws, GDRO preserves sample-level
structure that unrestricted support perturbations may destroy.

\bibliographystyle{plainnat}
\bibliography{sinkhorn_sampler_dro_refs}

\appendix


\section{Proofs}
\label{app:proofs}

This appendix contains full proofs for the results stated 
in Section~\ref{sec:theory}. For convenience, each theorem 
or lemma is restated before its proof.

{
The population proof below concerns $P_0^x$ and $Q_\psi^x$ at the
fixed context $x$. All optimization proofs condition on the fixed
Monte Carlo batches and use the empirical shorthand introduced in
Section~\ref{sec:setup}.
}

\paragraph{Basic assumptions.}
We use the following basic assumptions throughout the theoretical
analysis:
\begin{enumerate}
    \item[(A1)] $f(w,y)$ is $L_f$-Lipschitz in $y$ uniformly in
    $w \in \mathcal{W}$.
    
    \item[(A2)] $\mathcal{W} \subset \mathbb{R}^d$ is convex and
    compact with diameter $D_{\mathcal{W}}$.
    
    \item[(A3)] {Nominal and adversarial generator outputs
    are uniformly bounded as stated in Section~\ref{sec:setup}.}
\end{enumerate}

\paragraph{Notation.}
For a fixed context $x$, we use the following notation:
\begin{itemize}[leftmargin=2em,itemsep=2pt]
    \item {$F(w,\psi):=\widehat F_M(w,\psi)$ denotes the
    empirical downstream loss under adversary $\psi$; this shorthand
    is used only in the fixed-batch optimization proofs.}

    \item $\mathcal{A}(\hat\phi,\rho,x)
    :=
    \left\{
    \psi :
    S_\varepsilon
    \left(
        \widehat{P}_{G_\psi,x},
        \widehat{P}_{G_{\hat\phi},x}
    \right)
    \le \rho
    \right\}$
    denotes the Sinkhorn ambiguity set. When $\hat\phi$, $\rho$,
    and $x$ are clear from context, we write simply $\mathcal{A}$.

    \item $\phi(w) := \max_{\psi \in \mathcal{A}} F(w,\psi)$
    denotes the robust outer objective, and
    $\phi^* := \min_{w \in \mathcal{W}} \phi(w)$ denotes its
    optimal value.

    \item $\psi^*(w) \in \arg\max_{\psi \in \mathcal{A}} F(w,\psi)$
    denotes a worst-case adversary for decision $w$, and
    $w^* \in \arg\min_{w \in \mathcal{W}} \phi(w)$ denotes an
    optimal robust decision.

    \item $s_k :=
    S_\varepsilon
    \left(
        \widehat{P}_{G_{\psi_k},x},
        \widehat{P}_{G_{\hat\phi},x}
    \right)$
    denotes the Sinkhorn divergence at the $k$-th inner iterate.

    \item {${\gamma_M}:=\varepsilon\log M$ and
    $B:=\max\{\rho,4R^2\}$.}

    \item When analyzing the outer update,
    $\hat g_k \in \partial_w F(w^{(t)},\psi_k)$ denotes the
    outer subgradient associated with the $k$-th inner iterate, and
    \[
        \hat g^{(t)}
        :=
        \frac{1}{K}\sum_{k=1}^K \hat g_k
    \]
    denotes the averaged outer subgradient used in
    Algorithm~\ref{alg:GDRO}.
\end{itemize}

Additional assumptions required only for the joint convergence theorem
are stated explicitly in the proof of
Theorem~\ref{thm:joint_convergence}.

\subsection{Proof of Lemma~\ref{lem:sinkhorn_bound}}
\label{app:proof_sinkhorn_bound}

{
\begin{lemma}[Uniform Sinkhorn upper bound; restatement of
Lemma~\ref{lem:sinkhorn_bound}]
\label{lem:sinkhorn_bound_restated}
Under Assumption~\textup{(A3)}, any two nominal or adversarial output
laws $P,Q$ satisfy $0\le S_\varepsilon(P,Q)\le4R^2$.
\end{lemma}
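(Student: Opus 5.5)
The proof has two parts: an upper bound on the cross term $\operatorname{OT}_\varepsilon(P,Q)$ obtained from a single admissible coupling, and a lower bound of zero on each self-transport term. Nonnegativity of $S_\varepsilon$ is cited as a known fact. Throughout, $P$ and $Q$ are any of the nominal or adversarial output laws. By (A3) and the pushforward representation \eqref{eq:conditional_pushforwards}, each of them is supported in the closed ball $\{y:\|y\|_2\le R\}$. Only this support property is used: no common latent space, pairing of draws, or density is needed. The same argument therefore covers the population laws $P_0^x,Q_\psi^x$ and the empirical laws $\widehat P_{G_\psi,x},\widehat P_{G_{\hat\phi},x}$, since the latter are also supported in the ball almost surely.

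\emph{Upper bound on the cross term.} The product coupling $\pi=P\otimes Q$ lies in $\Pi(P,Q)$, and $\operatorname{KL}(P\otimes Q\,\|\,P\otimes Q)=0$. For $y,y'$ in the ball, the triangle inequality gives $c(y,y')=\|y-y'\|_2^2\le(\|y\|_2+\|y'\|_2)^2\le 4R^2$. Plugging this coupling into \eqref{eq:population_entropic_ot} yields $\operatorname{OT}_\varepsilon(P,Q)\le\int c\,\mathrm d(P\otimes Q)\le 4R^2$.

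In the empirical case we check that the convention in \eqref{eq:discrete_entropic_ot} gives the same result. The uniform plan $\pi_{ij}=M^{-2}$ makes each term $\pi_{ij}\log(M^2\pi_{ij})$ vanish, so $\operatorname{OT}_\varepsilon(\alpha,\beta)\le 4R^2$. The argument also goes through if the alternative entropy convention is used, because the constant $2\varepsilon\log M$ cancels in $S_\varepsilon$, as noted in Section~\ref{subsec:finite_sample}.

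\emph{Self terms are nonnegative.} For any $\pi\in\Pi(P,P)$ we have $c\ge0$ and $\operatorname{KL}(\pi\,\|\,P\otimes P)\ge0$, so $\operatorname{OT}_\varepsilon(P,P)\ge0$, and likewise for $Q$. In the discrete case, $\sum_{ij}\pi_{ij}\log(M^2\pi_{ij})$ is exactly the KL divergence of $\pi$ from the uniform product plan, so it is also $\ge0$. Substituting both facts into \eqref{eq:population_sinkhorn} gives
\[
S_\varepsilon(P,Q)\le \operatorname{OT}_\varepsilon(P,Q)\le 4R^2 .
\]

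\emph{Lower bound.} Nonnegativity $S_\varepsilon(P,Q)\ge0$ is the positivity theorem of \citet{feydy2019sinkhorn}. It holds for the squared-Euclidean cost (whose Gaussian kernel $e^{-c/\varepsilon}$ is positive definite) and for compactly supported measures, and the ball of radius $R$ supplies the compactness. Empirical measures are a special case, so the bound applies to them as well. Taking $P=\widehat P_{G_\psi,x}$ and $Q=\widehat P_{G_{\hat\phi},x}$ then gives $0\le s(\psi)\le 4R^2$ for every $\psi\in\Psi$.

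The main obstacle is that nonnegativity is not elementary: it relies on the positive-definite-kernel argument of Feydy et al. I would cite that result rather than reprove it, taking care only to verify its hypotheses (compact support, kernel $e^{-\|y-y'\|^2/\varepsilon}$ positive definite). The remaining steps are routine.
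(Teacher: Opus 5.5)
Your proposal is correct and follows the paper's proof essentially line for line. The product coupling $P\otimes Q$ has zero KL penalty and transport cost at most $4R^2$; the self-transport terms are nonnegative, so $S_\varepsilon(P,Q)\le\operatorname{OT}_\varepsilon(P,Q)\le4R^2$; and nonnegativity is cited from \citet{feydy2019sinkhorn}, as the paper does. Your extra check that the discrete convention in \eqref{eq:discrete_entropic_ot} gives the same bound for the empirical laws is a harmless refinement that the paper leaves implicit.
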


\begin{proof}
The independent coupling $P\otimes Q$ is feasible in
\eqref{eq:population_entropic_ot} and has zero KL penalty. Since both
laws are supported in the radius-$R$ ball,
\[
    \|y-y'\|_2^2\le(\|y\|_2+\|y'\|_2)^2\le4R^2.
\]
Consequently,
$\operatorname{OT}_\varepsilon(P,Q)\le4R^2$. The two self-transport
terms are nonnegative, so
$S_\varepsilon(P,Q)\le\operatorname{OT}_\varepsilon(P,Q)\le4R^2$.
Nonnegativity follows from the positivity of the debiased Sinkhorn
divergence for squared-Euclidean cost~\citep{feydy2019sinkhorn}.
\end{proof}

\subsection{Proof of the Empirical Sinkhorn--Wasserstein Comparison}
\label{app:proof_empirical_sw_gap}

\begin{proof}
Let $\alpha=M^{-1}\sum_i\delta_{a_i}$ and
$\beta=M^{-1}\sum_j\delta_{b_j}$. The KL term in
\eqref{eq:discrete_entropic_ot} is nonnegative, hence
$\operatorname{OT}_\varepsilon(\alpha,\beta)\ge
W_2^2(\alpha,\beta)$. The unregularized problem has an optimal
permutation coupling. Its $M$ nonzero entries equal $1/M$, so its KL
divergence from the product weights $1/M^2$ is $\log M$. Therefore
\[
    0\le
    \operatorname{OT}_\varepsilon(\alpha,\beta)
    -W_2^2(\alpha,\beta)
    \le\varepsilon\log M.
\]
Applying the same argument to the two self-transport terms shows that
$S_\varepsilon(\alpha,\beta)-W_2^2(\alpha,\beta)$ is one number in
$[0,{\gamma_M}]$ minus half of each of two numbers in
$[0,{\gamma_M}]$. It therefore lies in
$[-{\gamma_M},{\gamma_M}]$, proving~\eqref{eq:empirical_sw_gap}.
\end{proof}
}

\subsection{Proof of Theorem~\ref{thm:decision_alignment}}
\label{app:proof_decision_alignment}

{
\begin{proof}
The radius-$R$ ball is compact, so $\mathcal P_R$ is compact under weak
convergence. On this space, $S_\varepsilon$ is continuous, positive
definite, and metrizes weak convergence~\citep[Theorem~1]{feydy2019sinkhorn};
$W_1$ is also continuous.

If $\omega_{\varepsilon,R}(r)$ did not tend to zero, there would be
$r_n\downarrow0$ and $P_n,Q_n\in\mathcal P_R$ with
$S_\varepsilon(P_n,Q_n)\le r_n$ but $W_1(P_n,Q_n)$ bounded away from
zero. Compactness gives a subsequence converging to some $P,Q$.
Continuity gives $S_\varepsilon(P,Q)=0$, hence $P=Q$, while continuity
of $W_1$ gives $W_1(P_n,Q_n)\to0$, a contradiction. Thus
$\omega_{\varepsilon,R}(r)\downarrow0$.

For a population-feasible $\psi$, definition
\eqref{eq:sinkhorn_modulus} gives
$W_1(Q_\psi^x,P_0^x)\le\omega_{\varepsilon,R}(\rho)$.

Assumption~\textup{(A1)} and Kantorovich--Rubinstein duality now yield
\[
    \left|\mathbb E_{Q_\psi^x}[f(w,Y)]
    -\mathbb E_{P_0^x}[f(w,Y)]\right|
    \le L_f W_1(Q_\psi^x,P_0^x)
    \le L_f\omega_{\varepsilon,R}(\rho).
\]
\end{proof}

\begin{proof}[Proof of Corollary~\ref{cor:empirical_decision_alignment}]
For $\psi\in\mathcal A$, Assumption~\textup{(A1)},
Kantorovich--Rubinstein duality, $W_1\le W_2$, and
\eqref{eq:empirical_sw_gap} give
\[
    |F(w,\psi)-F_0(w)|
    \le L_fW_1(\widehat P_{G_\psi,x},\widehat P_{G_{\hat\phi},x})
    \le L_f\sqrt{s(\psi)+{\gamma_M}}
    \le L_f\sqrt{\rho+{\gamma_M}}.
\]
\end{proof}
}

\subsection{Proof of Theorem~\ref{thm:inner_convergence}}
\label{app:proof_inner}

\begin{theorem}[Inner convergence; restated]
{Under~\textup{(A1)--(A3)} and
\eqref{eq:exact_inner_oracle}, run the inner loop for $K$ dual steps
with $\eta_\mu=\mu^{(1)}/(B\sqrt K)$. Then}
\[
    F\bigl(w, \psi^*(w)\bigr)
    - \frac{1}{K}\sum_{k=1}^{K} F(w, \psi_k)
    \;\le\;
    \frac{{B}\,\mu^{(1)}}{\sqrt{K}},
\]
where $\psi^*(w) \in \arg\max_{\psi\in\mathcal{A}} F(w,\psi)$ is the
inner optimum and $\mu^{(1)} > 0$ is the initial dual variable.
\end{theorem}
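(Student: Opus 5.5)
The plan is the classical dual-subgradient argument: an oracle step turns primal suboptimality into a linear term in the multiplier, and a projected-descent regret bound on $\mu$ controls that term. Index the multipliers so that $\mu_1=\mu^{(1)}$ and $\psi_k$ is computed at $\mu_k$. Write $g_k:=s_k-\rho$, so the dual step is $\mu_{k+1}=[\mu_k+\eta_\mu g_k]_+$.

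\emph{Step 1 (oracle comparison).} By \eqref{eq:exact_inner_oracle}, $\widehat{\mathcal L}(w,\psi_k,\mu_k)\ge\widehat{\mathcal L}(w,\psi^*(w),\mu_k)$. Since $\psi^*(w)\in\mathcal A$ and $\mu_k\ge0$, the penalty term $-\mu_k(s(\psi^*(w))-\rho)$ is nonnegative. Hence the right-hand side is at least $F(w,\psi^*(w))$. Expanding the left-hand side gives
\[
F\bigl(w,\psi^*(w)\bigr)-F(w,\psi_k)\le -\mu_k g_k .
\]
The task is therefore to bound $\frac1K\sum_k(-\mu_k g_k)$.

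\emph{Step 2 (dual regret against the comparator $\mu=0$).} The projection $[\cdot]_+$ is nonexpansive and $0$ lies in the feasible set, so
\[
\mu_{k+1}^2\le(\mu_k+\eta_\mu g_k)^2=\mu_k^2+2\eta_\mu\mu_k g_k+\eta_\mu^2g_k^2 .
\]
Rearranging gives $-\mu_kg_k\le(\mu_k^2-\mu_{k+1}^2)/(2\eta_\mu)+\eta_\mu g_k^2/2$. Summing over $k=1,\dots,K$, the first term telescopes and is at most $\mu_1^2/(2\eta_\mu)$. For the second term, Lemma~\ref{lem:sinkhorn_bound} gives $s_k\in[0,4R^2]$, so $|g_k|\le\max\{\rho,4R^2\}=B$. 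This is the only place where the Sinkhorn structure enters.

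\emph{Step 3 (step size).} Dividing by $K$ gives
\[
F\bigl(w,\psi^*(w)\bigr)-\frac1K\sum_{k=1}^K F(w,\psi_k)\le\frac{(\mu^{(1)})^2}{2\eta_\mu K}+\frac{\eta_\mu B^2}{2}.
\]
Substituting $\eta_\mu=\mu^{(1)}/(B\sqrt K)$ makes each of the two terms equal to $B\mu^{(1)}/(2\sqrt K)$, which yields the claimed bound.

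There is no serious obstacle. The one point needing care is the sign convention in Step 2. The multiplier moves \emph{up} along $s_k-\rho$, which is descent on the dual function because $s_k-\rho$ is a subgradient of $-Q_w$-type terms with reversed sign. The inequality is therefore set up as a regret bound for the linear losses $\mu\mapsto-\mu g_k$ against the comparator $0$, rather than against $\mu^*(w)$. With this choice no dual-boundedness assumption is needed here; Assumption~\ref{ass:dual_regularity} is required only for Lemma~\ref{lem:slack}.
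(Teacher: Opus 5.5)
Your proof is correct and follows the paper's argument step for step. The exact-oracle/weak-duality comparison gives $F(w,\psi^*(w))-F(w,\psi_k)\le\mu_k(\rho-s_k)$, and the projected-dual regret inequality against the comparator $\mu=0$ (with $|\rho-s_k|\le B$ from Lemma~\ref{lem:sinkhorn_bound} and $\eta_\mu=\mu^{(1)}/(B\sqrt K)$) bounds the average of these terms by $B\mu^{(1)}/\sqrt K$; the only cosmetic difference is that the paper first states the regret bound for a general reference $\mu\ge0$, so it can reuse it with $\mu=\mu_C$ in Lemma~\ref{lem:slack}, whereas you set $\mu=0$ immediately.
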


\begin{proof}
The inner loop is projected primal--dual ascent on the Lagrangian
$\mathcal{L}(w,\psi,\mu) = F(w,\psi) - \mu\,(s(\psi)-\rho)$: at each
step $k$, $\psi_k$ maximizes $\mathcal{L}(w,\cdot,\mu_k)$ and
{the dual variable is updated by projected descent,
$\mu_{k+1}=[\mu_k-\eta_\mu b_k]_+$, with
$b_k:=\rho-s_k$.}

We use the standard duality-gap analysis for saddle-point subgradient
problems~\citep{zinkevich2003online,nedic2009subgradient}.

\textbf{Dual trajectory.}
{
Nonexpansiveness of projection gives, for any reference $\mu\ge0$,
\begin{equation}
    \frac{1}{K}\sum_{k=1}^K (\mu_k - \mu)\, b_k
    \;\le\;
    \frac{(\mu^{(1)}-\mu)^2}{2K\eta_\mu}
    +\frac{\eta_\mu}{2K}\sum_{k=1}^K |b_k|^2.
    \label{eq:projected_dual_regret}
\end{equation}
By Lemma~\ref{lem:sinkhorn_bound}, $0\le s_k\le4R^2$, so
$|b_k|\le B$. Taking $\mu=0$ and
$\eta_\mu = \mu^{(1)}/(B\sqrt{K})$ yields
\begin{equation}
    \frac{1}{K}\sum_{k=1}^K \mu_k\, b_k
    \;\le\; \frac{B\,\mu^{(1)}}{\sqrt{K}}.
    \label{eq:dual_bound}
\end{equation}
}

\textbf{Duality gap.}
For every $\mu_k \ge 0$, weak duality gives
\[
    Q(\mu_k) := \max_{\psi}\bigl[F(w,\psi) + \mu_k\,(\rho - s(\psi))\bigr]
    \;\ge\; \max_{\psi\in\mathcal{A}} F(w,\psi) = F\bigl(w,\psi^*(w)\bigr),
\]
since any feasible $\psi\in\mathcal{A}$ satisfies $s(\psi)\le\rho$ and
hence $\mu_k(\rho-s(\psi))\ge 0$. As $\psi_k$ maximizes the Lagrangian
at $\mu_k$,
\[
    F(w,\psi_k) + \mu_k b_k \;=\; Q(\mu_k) \;\ge\; F\bigl(w,\psi^*(w)\bigr).
\]
Averaging over $k=1,\dots,K$ and rearranging,
\[
    F\bigl(w,\psi^*(w)\bigr) - \frac{1}{K}\sum_{k=1}^K F(w,\psi_k)
    \;\le\; \frac{1}{K}\sum_{k=1}^K \mu_k b_k
    \;\le\; \frac{B\,\mu^{(1)}}{\sqrt{K}},
\]
where the last step is~\eqref{eq:dual_bound}.
{An approximate oracle contributes its average
Lagrangian error additively, as noted after
\eqref{eq:exact_inner_oracle}.}
\end{proof}

\subsection{Proof of Lemma~\ref{lem:slack}}
\label{app:proof_slack}

The argument below follows the standard long-term-constraint analysis
of \citet{mahdavi2012trading}. The comparison-multiplier construction is
likewise standard and is used, for example, in the GAS-DRO analysis of
\citet{wen2026gasdro}. We include the relevant details here to keep the
Sinkhorn specialization self-contained.

\begin{proof}
Fix the outer decision $w$ and omit it from the notation. Define
\[
    b_k:=\rho-s_k,
    \qquad
    v_k:=s_k-\rho=-b_k .
\]
By Lemma~\ref{lem:sinkhorn_bound}, $0\le s_k\le4R^2$, and hence
\[
    |b_k|\le B:=\max\{\rho,4R^2\}.
\]

We first upper-bound the projected-dual regret term. Applying
\eqref{eq:projected_dual_regret} with reference multiplier
$\mu=\mu_C$ gives
\[
    \frac1K\sum_{k=1}^K(\mu_k-\mu_C)b_k
    \le
    \frac{|\mu_C-\mu^{(1)}|^2}{2K\eta_\mu}
    +
    \frac{\eta_\mu}{2K}\sum_{k=1}^K |b_k|^2 .
\]
Using $|b_k|\le B$ and
$\eta_\mu=\mu^{(1)}/(B\sqrt K)$ yields
\begin{equation}
    \frac1K\sum_{k=1}^K(\mu_k-\mu_C)b_k
    \le
    \frac{B}{\sqrt K}
    \left(
        \frac{\mu^{(1)}}{2}
        +
        \frac{|\mu_C-\mu^{(1)}|^2}{2\mu^{(1)}}
    \right).
    \label{eq:slack_upper}
\end{equation}

We now lower-bound the same term by the average constraint violation.
Recall the dual function
\[
    Q_w(\mu)
    :=
    \sup_{\psi\in\Psi}
    \{F(w,\psi)+\mu(\rho-s(\psi))\}.
\]
By the exact inner oracle condition~\eqref{eq:exact_inner_oracle},
\begin{equation}
    Q_w(\mu_k)=F(w,\psi_k)+\mu_k b_k .
    \label{eq:Q_oracle}
\end{equation}
Let $\mu^*(w)$ be the selected minimizer of $Q_w$ from
Assumption~\ref{ass:dual_regularity}. Since
$Q_w(\mu_k)\ge Q_w(\mu^*(w))$, \eqref{eq:Q_oracle} implies
\begin{equation}
    \mu_k b_k
    =
    Q_w(\mu_k)-F(w,\psi_k)
    \ge
    Q_w(\mu^*(w))-F(w,\psi_k).
    \label{eq:mukbk_lower}
\end{equation}
Moreover, evaluating the supremum defining $Q_w(\mu^*(w))$ at
the candidate $\psi_k$ gives
\begin{equation}
    F(w,\psi_k)+\mu^*(w)b_k
    \le
    Q_w(\mu^*(w)).
    \label{eq:Q_candidate}
\end{equation}
Since $b_k=-v_k$, \eqref{eq:Q_candidate} is equivalent to
\begin{equation}
    F(w,\psi_k)
    \le
    Q_w(\mu^*(w))+\mu^*(w)v_k .
    \label{eq:F_upper}
\end{equation}
Substituting \eqref{eq:F_upper} into \eqref{eq:mukbk_lower} gives
\[
    \mu_k b_k
    \ge
    Q_w(\mu^*(w))
    -
    \bigl[Q_w(\mu^*(w))+\mu^*(w)v_k\bigr]
    =
    -\mu^*(w)v_k .
\]
Therefore, using $b_k=-v_k$,
\begin{align}
    \sum_{k=1}^K(\mu_k-\mu_C)b_k
    &=
    \sum_{k=1}^K\mu_k b_k
    -
    \mu_C\sum_{k=1}^K b_k \notag\\
    &=
    \sum_{k=1}^K\mu_k b_k
    +
    \mu_C\sum_{k=1}^K v_k \notag\\
    &\ge
    -\mu^*(w)\sum_{k=1}^K v_k
    +
    \mu_C\sum_{k=1}^K v_k \notag\\
    &=
    \bigl(\mu_C-\mu^*(w)\bigr)\sum_{k=1}^K v_k .
    \label{eq:slack_lower}
\end{align}

Combining \eqref{eq:slack_lower} with \eqref{eq:slack_upper}, and
dividing by $\mu_C-\mu^*(w)>0$, gives
\[
    \frac1K\sum_{k=1}^K v_k
    \le
    \frac{B}{(\mu_C-\mu^*(w))\sqrt K}
    \left(
        \frac{\mu^{(1)}}{2}
        +
        \frac{|\mu_C-\mu^{(1)}|^2}{2\mu^{(1)}}
    \right).
\]
By Assumption~\ref{ass:dual_regularity},
$\mu^*(w)\le\overline\mu^*$, and $\mu_C>\overline\mu^*$. Hence
\[
    \frac1K\sum_{k=1}^K v_k
    \le
    \frac{B}{(\mu_C-\overline\mu^*)\sqrt K}
    \left(
        \frac{\mu^{(1)}}{2}
        +
        \frac{|\mu_C-\mu^{(1)}|^2}{2\mu^{(1)}}
    \right)
    =
    \frac{C_K}{\sqrt K}.
\]
Finally, since $v_k=s_k-\rho$,
\[
    \frac1K\sum_{k=1}^K s_k
    =
    \rho+\frac1K\sum_{k=1}^K v_k
    \le
    \rho+\frac{C_K}{\sqrt K}.
\]
This proves the claim.
\end{proof}
\subsection{Proof of Theorem~\ref{thm:joint_convergence}}
\label{app:proof_joint}

\begin{proof}
Let $w^*\in\arg\min_{w\in\mathcal W}\phi(w)$. Under
Assumption~\ref{ass:outer_convex}, $F(\cdot,\psi)$ is convex on
$\mathcal W$ for every $\psi$. Hence
\[
    \phi(w)=\max_{\psi\in\mathcal A}F(w,\psi)
\]
is convex as a pointwise maximum of convex functions.

Define
\[
    \Delta_K
    :=
    \frac{\max\{\rho,4R^2\}\mu^{(1)}}{\sqrt K}
\]
and
\[
    \mathcal R_K
    :=
    2L_f\sqrt{\rho+\gamma_M}
    +
    \frac{L_f C_K}{2\sqrt{(\rho+\gamma_M)K}}.
\]
We first show that the averaged inner gradient
$\hat g^{(t)}=K^{-1}\sum_{k=1}^K\hat g_k$ satisfies the approximate
subgradient inequality
\begin{equation}
    \phi(w^{(t)})-\phi^*
    \le
    \langle \hat g^{(t)},w^{(t)}-w^*\rangle
    +\Delta_K+\mathcal R_K .
    \label{eq:approx_sg_compact}
\end{equation}

For each $k$, convexity of $F(\cdot,\psi_k)$ gives
\[
    F(w^{(t)},\psi_k)-F(w^*,\psi_k)
    \le
    \langle \hat g_k,w^{(t)}-w^*\rangle .
\]
Averaging over $k$ yields
\begin{equation}
    \frac1K\sum_{k=1}^K
    \bigl[F(w^{(t)},\psi_k)-F(w^*,\psi_k)\bigr]
    \le
    \langle \hat g^{(t)},w^{(t)}-w^*\rangle .
    \label{eq:avg_convexity}
\end{equation}

Next, we compare $F(w^*,\psi_k)$ to $\phi^*$. Since
$\phi^*=F(w^*,\psi^*(w^*))$, Assumption~\textup{(A1)} and
Kantorovich--Rubinstein duality give
\[
    F(w^*,\psi_k)-\phi^*
    \le
    L_f W_1(
        \widehat P_{G_{\psi_k},x},
        \widehat P_{G_{\psi^*(w^*)},x}
    ).
\]
By the triangle inequality for $W_1$,
\[
\begin{aligned}
    W_1(
        \widehat P_{G_{\psi_k},x},
        \widehat P_{G_{\psi^*(w^*)},x}
    )
    &\le
    W_1(
        \widehat P_{G_{\psi_k},x},
        \widehat P_{G_{\hat\phi},x}
    )  \\
    &\quad+
    W_1(
        \widehat P_{G_{\hat\phi},x},
        \widehat P_{G_{\psi^*(w^*)},x}
    ).
\end{aligned}
\]
Using $W_1\le W_2$ and the empirical Sinkhorn--Wasserstein comparison
\eqref{eq:empirical_sw_gap}, we have
\[
    W_1(
        \widehat P_{G_{\psi_k},x},
        \widehat P_{G_{\hat\phi},x}
    )
    \le
    \sqrt{s_k+\gamma_M},
\]
while $\psi^*(w^*)\in\mathcal A$ implies
\[
    W_1(
        \widehat P_{G_{\hat\phi},x},
        \widehat P_{G_{\psi^*(w^*)},x}
    )
    \le
    \sqrt{\rho+\gamma_M}.
\]
Therefore,
\[
    F(w^*,\psi_k)-\phi^*
    \le
    L_f\left(\sqrt{s_k+\gamma_M}
    +
    \sqrt{\rho+\gamma_M}\right).
\]
Averaging the preceding pointwise bound over $k=1,\ldots,K$ gives
\begin{align}
    \frac1K\sum_{k=1}^K F(w^*,\psi_k)-\phi^*
    &=
    \frac1K\sum_{k=1}^K\bigl[F(w^*,\psi_k)-\phi^*\bigr] \notag\\
    &\le
    L_f\frac1K\sum_{k=1}^K\sqrt{s_k+\gamma_M}
    +L_f\sqrt{\rho+\gamma_M}.
    \label{eq:Rk_average}
\end{align}
Since $u\mapsto\sqrt{u}$ is concave, Jensen's inequality gives
\[
    \frac1K\sum_{k=1}^K\sqrt{s_k+\gamma_M}
    \le
    \sqrt{\frac1K\sum_{k=1}^K(s_k+\gamma_M)}
    =
    \sqrt{\frac1K\sum_{k=1}^K s_k+\gamma_M}.
\]
Substituting this into~\eqref{eq:Rk_average} yields
\begin{equation}
    \frac1K\sum_{k=1}^K F(w^*,\psi_k)-\phi^*
    \le
    L_f\sqrt{\frac1K\sum_{k=1}^K s_k+\gamma_M}
    +L_f\sqrt{\rho+\gamma_M}.
    \label{eq:Rk_jensen}
\end{equation}
By Lemma~\ref{lem:slack},
\[
    \frac1K\sum_{k=1}^K s_k
    \le
    \rho+\frac{C_K}{\sqrt K}.
\]
Substituting this bound into~\eqref{eq:Rk_jensen} gives
\begin{equation}
    \frac1K\sum_{k=1}^K F(w^*,\psi_k)-\phi^*
    \le
    L_f\sqrt{\rho+\gamma_M+\frac{C_K}{\sqrt K}}
    +L_f\sqrt{\rho+\gamma_M}.
    \label{eq:Rk_slack_substitution}
\end{equation}
Finally, using
\[
    \sqrt{a+b}\le \sqrt a+\frac{b}{2\sqrt a}
    \qquad(a>0,\ b\ge0),
\]
with $a=\rho+\gamma_M$ and $b=C_K/\sqrt K$, we have
\[
    \sqrt{\rho+\gamma_M+\frac{C_K}{\sqrt K}}
    \le
    \sqrt{\rho+\gamma_M}
    +
    \frac{C_K}{2\sqrt{(\rho+\gamma_M)K}}.
\]
Substituting this into~\eqref{eq:Rk_slack_substitution} gives
\begin{equation}
    \frac1K\sum_{k=1}^K F(w^*,\psi_k)-\phi^*
    \le
    2L_f\sqrt{\rho+\gamma_M}
    +
    \frac{L_f C_K}{2\sqrt{(\rho+\gamma_M)K}}
    =
    \mathcal R_K .
    \label{eq:Rk_bound}
\end{equation}

By Theorem~\ref{thm:inner_convergence},
\begin{equation}
    \phi(w^{(t)})
    =
    F(w^{(t)},\psi^*(w^{(t)}))
    \le
    \frac1K\sum_{k=1}^K F(w^{(t)},\psi_k)
    +\Delta_K .
    \label{eq:inner_value_joint}
\end{equation}
Combining \eqref{eq:inner_value_joint}, \eqref{eq:Rk_bound}, and
\eqref{eq:avg_convexity} gives
\[
\begin{aligned}
    \phi(w^{(t)})-\phi^*
    &\le
    \frac1K\sum_{k=1}^K F(w^{(t)},\psi_k)
    -\phi^*
    +\Delta_K \\
    &=
    \frac1K\sum_{k=1}^K
    \bigl[F(w^{(t)},\psi_k)-F(w^*,\psi_k)\bigr]
    +
    \left(
        \frac1K\sum_{k=1}^K F(w^*,\psi_k)-\phi^*
    \right)
    +\Delta_K \\
    &\le
    \frac1K\sum_{k=1}^K
    \bigl[F(w^{(t)},\psi_k)-F(w^*,\psi_k)\bigr]
    +\Delta_K+\mathcal R_K \\
    &\le
    \langle \hat g^{(t)},w^{(t)}-w^*\rangle
    +\Delta_K+\mathcal R_K .
\end{aligned}
\]
which proves \eqref{eq:approx_sg_compact}.

Now use the projected outer update
\[
    w^{(t+1)}
    =
    \Pi_{\mathcal W}
    \bigl(w^{(t)}-\eta_w\hat g^{(t)}\bigr).
\]
By nonexpansiveness of projection and, using
Assumption~\ref{ass:outer_convex},
$\|\hat g^{(t)}\|\le K^{-1}\sum_k\|\hat g_k\|\le G_w$,
\[
\begin{aligned}
    \|w^{(t+1)}-w^*\|^2
    &\le
    \|w^{(t)}-\eta_w\hat g^{(t)}-w^*\|^2 \\
    &=
    \|w^{(t)}-w^*\|^2
    -2\eta_w
    \langle \hat g^{(t)},w^{(t)}-w^*\rangle
    +\eta_w^2\|\hat g^{(t)}\|^2 \\
    &\le
    \|w^{(t)}-w^*\|^2
    -2\eta_w
    \langle \hat g^{(t)},w^{(t)}-w^*\rangle
    +\eta_w^2G_w^2 .
\end{aligned}
\]
Rearranging and using \eqref{eq:approx_sg_compact},
\begin{equation}
    \phi(w^{(t)})-\phi^*
    \le
    \frac{\|w^{(t)}-w^*\|^2-\|w^{(t+1)}-w^*\|^2}{2\eta_w}
    +
    \frac{\eta_wG_w^2}{2}
    +
    \Delta_K+\mathcal R_K .
    \label{eq:one_step_compact}
\end{equation}

Summing \eqref{eq:one_step_compact} over $t=1,\ldots,T$ and using
$\|w^{(1)}-w^*\|\le D_{\mathcal W}$ gives
\[
    \frac1T\sum_{t=1}^T[\phi(w^{(t)})-\phi^*]
    \le
    \frac{D_{\mathcal W}^2}{2\eta_w T}
    +
    \frac{\eta_wG_w^2}{2}
    +
    \Delta_K+\mathcal R_K .
\]
By convexity of $\phi$,
\[
    \phi(\bar w^{(T)})\le \frac1T\sum_{t=1}^T\phi(w^{(t)}).
\]
Therefore,
\[
    \phi(\bar w^{(T)})-\phi^*
    \le
    \frac{D_{\mathcal W}^2}{2\eta_w T}
    +
    \frac{\eta_wG_w^2}{2}
    +
    \Delta_K+\mathcal R_K .
\]
Finally, substituting
\[
    \eta_w=\frac{D_{\mathcal W}}{G_w\sqrt T}
\]
gives
\[
    \frac{D_{\mathcal W}^2}{2\eta_w T}
    +
    \frac{\eta_wG_w^2}{2}
    =
    \frac{D_{\mathcal W}G_w}{\sqrt T}.
\]
Substituting the definitions of $\Delta_K$ and $\mathcal R_K$ yields
\[
\begin{aligned}
    \phi(\bar w^{(T)})-\phi^*
    \le\;
    &
    \frac{D_{\mathcal W}G_w}{\sqrt T}
    +
    \frac{\max\{\rho,4R^2\}\mu^{(1)}}{\sqrt K} \\
    &+
    \frac{L_f C_K}{2\sqrt{(\rho+\gamma_M)K}}
    +
    2L_f\sqrt{\rho+\gamma_M},
\end{aligned}
\]
which is the desired bound.
\end{proof}


\section{Experimental Details}
\label{app:experiments}

We evaluate GDRO on two downstream decision problems.  The first is
contextual newsvendor, where the decision is convex and the optimal
order for a fixed scenario distribution has a weighted-quantile form.
We study this task on both a synthetic benchmark, used to validate the
rare-tail mechanism under a known data-generating process, and the real
M5 retail demand dataset.  The second downstream problem is robot
navigation among pedestrians, where the decision is nonconvex and the
nominal model is the official implicit SocialGAN trajectory generator.
Across all experiments, the nominal generator defines $P_0^x$, the
adversarial generator defines $Q_\psi^x$, and each method chooses a
decision $w\in\mathcal W$ before the realized outcome $Y$ is revealed.

\subsection{Baselines and Solvers}
\label{app:baselines}

All methods use the same fitted nominal generator within each
benchmark.  They differ only in how they form the scenario distribution
used by the downstream decision solver.

\paragraph{Nominal.}
The nominal baseline optimizes the decision using Monte Carlo samples
from $P_0^x$.  This is the standard sample-average decision induced by
the fitted generator.

\paragraph{KL.}
The KL baseline follows the empirical $f$-divergence DRO formulation of
\citet{namkoong2016stochastic}.  At each context, it keeps the nominal
sample cloud fixed and adversarially reweights the samples within a KL
ball.  For a fixed decision, the KL-constrained worst-case weights have
the exponential-tilt form \citep{hu2013kullback}; we choose the tilt
temperature to match the prescribed radius.  The downstream decision is
then updated under the weighted samples: by weighted critical quantiles
for newsvendor and by weighted first-order optimization for SocialGAN.
Thus this baseline can emphasize high-loss nominal samples but cannot
create outcomes outside the nominal sample cloud.

\paragraph{W2.}
The W2 baseline adapts the adversarial sample-perturbation idea of
\citet{sinha2018certifying} to the downstream decision stage as a
finite-sample Wasserstein-2 robustness baseline.  Given nominal
scenarios from $P_0^x$, we introduce adversarial support points and
move them directly in $\mathcal Y$ to increase the downstream loss,
while controlling their average squared displacement from the nominal
cloud, i.e., the sample-wise empirical $W_2^2$ transport cost.  Unlike
the penalized robust-training objective in \citet{sinha2018certifying},
we use a primal--dual update for an explicit transport-radius
constraint, so that the baseline is comparable to the radius-constrained
GDRO formulation.  The downstream decision is then updated against the
perturbed scenario cloud.  This gives a uniform finite-sample W2
baseline for both the multi-product newsvendor task and the nonconvex
SocialGAN planning task.  Unlike GDRO, however, the adversarial
scenarios are free support points in $\mathcal Y$ rather than samples
produced by the chosen generator family.

\paragraph{GAS-DRO.}
For the newsvendor experiments, we include a conditional version of
GAS-DRO \citep{wen2026gasdro}.  The adversarial model uses the same
conditional decoder architecture as the nominal generator and is
optimized at the active context.  Its ambiguity set is controlled by
the GAS-DRO reconstruction certificate, computed over conditional
training pairs.  We denote by $J_0$ the value of this reconstruction
certificate for the fitted nominal generator on the training set, and
set GAS-DRO radii as multiples of $J_0$ in the experiments below.  We
include this baseline only when an explicit decoder is available; we do
not apply it to SocialGAN, whose released nominal model is an implicit
trajectory sampler.

\paragraph{GDRO.}
We use the empirical context-local Sinkhorn formulation from
Section~\ref{sec:method}.  The nominal generator is frozen, the
adversarial generator is initialized from the nominal architecture, and
the inner player updates adversarial generator parameters subject to
the Sinkhorn certificate at the active context.  The decision update
uses the average outer subgradient over the active inner adversaries. All Sinkhorn divergences are computed with GeomLoss\citep{feydy2019sinkhorn} using $p=2$ and blur $0.05$ across experiments. Task-specific radii, sample sizes, and optimization budgets are listed
in the configuration tables below.

\paragraph{Radius and optimization budget.}
For the synthetic newsvendor experiment, we use fixed radii for KL,
W2, and GDRO because the simulator is used as a controlled
mechanism study.  For the real-data experiments, the robustness level
is adapted to the active context.  In M5, we compute a historical
average-cost risk score from the observed demand history and use its
training-set $10\%$ and $90\%$ quantiles as low- and high-risk
anchors.  In SocialGAN, we use the nominal collision probability of the
nominal robot plan as the risk score.  The KL, W2, and GDRO
radii are increased for high-risk contexts and decreased for low-risk
contexts by a monotone validation-calibrated rule.  The inner
optimization budget is adapted in the same direction: high-risk
contexts receive more adversarial updates, while low-risk contexts use
a smaller budget.  These adaptations use only the observed context and
nominal generated scenarios, never realized test outcomes.

\subsection{Synthetic Contextual Newsvendor}
\label{app:synthetic}

\paragraph{Motivation.}
The synthetic benchmark is designed to test a specific failure mode of
nominal generative decision-making: a conditional generator may fit the
bulk of the demand distribution while underrepresenting rare,
high-cost demand tails.  The data-generating process therefore contains
frequent base contexts and rare contexts in which a tail component may
activate.  This lets us evaluate decisions against the true conditional
law while still requiring every method to make decisions from the same
learned nominal generator.

\paragraph{Data-generating process.}
We generate $p=20$ product demands from contexts
$x\in\mathbb R^5$, with
\[
    x\sim \mathcal N(0,I).
\]
A context is rare if
\[
    \mathrm{rare}(x)
    =
    \mathbf 1\{a^\top x>\tau\},
\]
where $a$ is a fixed unit vector and $\tau$ is the $0.9$ training
quantile of $a^\top x$.

Demand is generated through a latent factor $h\in\mathbb R^2$.  In the
base component,
\[
    h\mid x \sim \mathcal N(Ax,\Sigma_h).
\]
In rare contexts, a tail component is activated with probability
$\pi_{\mathrm{tail}}=0.10$:
\[
    h\mid x,\mathrm{tail}
    \sim
    \mathcal N(Ax+\Delta_h,\Sigma_{\mathrm{tail}}),
    \qquad
    \Delta_h=(2.0,0).
\]
Frequent contexts always use the base component.  For product $j$, the
latent score is
\[
    s_j(x,h)
    =
    b_j+r_j^\top x+\beta_j h_1+\gamma_j h_2
    +a_j^{\sin}\sin(v_j^\top h)+\epsilon_j,
    \qquad
    \epsilon_j\sim\mathcal N(0,\sigma_\epsilon^2),
\]
and demand is
\[
    Y_j
    =
    \mu_0+\mu_1\,\mathrm{softplus}(s_j(x,h)).
\]
We use $\sigma_\epsilon=0.15$, $\mu_0=10$, and $\mu_1=5$.  Product
parameters share group-level structure plus product-specific
perturbations, inducing correlated nonlinear demands across products.

\paragraph{Nominal generator.}
The nominal model is a conditional VAE trained on samples from the
data-generating process.  The context is provided to the encoder and
decoder.  Demand is standardized for VAE training, but generated
samples are transformed back to demand units before the downstream
decision and evaluation. The full synthetic newsvendor configuration
is summarized in Table~\ref{tab:synthetic-config}.

\paragraph{Downstream decision.}
The decision is the order vector $w\in\mathbb R_+^{20}$.  The loss is
the multi-product newsvendor cost
\[
    f(w,Y)
    =
    \sum_{j=1}^{20}
    c_{u,j}(Y_j-w_j)_+
    +
    c_{o,j}(w_j-Y_j)_+ .
\]
For any weighted empirical scenario cloud, the optimal decision is the
per-product weighted critical quantile with critical ratio
$c_{u,j}/(c_{u,j}+c_{o,j})$.  The oracle is computed using simulator
samples from the true conditional law at the same context and is used
only for evaluation.

\begin{table}[t]
\centering
\caption{Synthetic contextual newsvendor configuration.}
\label{tab:synthetic-config}
\begin{tabular}{ll}
\toprule
Item & Value \\
\midrule
Products & $20$ \\
Context dimension & $5$ \\
DGP latent dimension & $2$ \\
Rare-context threshold & $0.9$ quantile of $a^\top x$ \\
Tail probability in rare contexts & $0.10$ \\
Tail shift & $(2.0,0)$ \\
Observation noise & $0.15$ \\
Demand scale $(\mu_0,\mu_1)$ & $(10,5)$ \\
Train / validation / test contexts & $2000 / 500 / 200$ \\
Underage costs $c_{u,j}$ & Uniform on $[15,25]$ \\
Overage costs $c_{o,j}$ & Uniform on $[1,3]$ \\
Nominal generator & Conditional VAE \\
VAE latent dimension & $4$ \\
VAE hidden width & $128$ \\
VAE epochs & $300$ \\
VAE learning rate & $10^{-3}$ \\
VAE KL weight & $0.02$ \\
Nominal/KL samples per context & $256$ \\
GDRO/W2/GAS-DRO samples per context & $128$ \\
KL radius & $0.1$ \\
GDRO radius & $2.5$ \\
W2 radius & $2.5$ \\
GAS-DRO radius & $0.2J_0$ \\
\bottomrule
\end{tabular}
\end{table}


\subsection{M5 Retail Newsvendor}
\label{app:m5}

\paragraph{Dataset and product selection.}
The M5 experiment uses real retail demand from the M5 forecasting
benchmark \citep{makridakis2022m5}.  Our goal in this experiment is not to compete on the full
M5 forecasting task, but to obtain a stable real-data nominal generator
for testing the downstream newsvendor decision mechanism.  We therefore
construct a contextual multi-product newsvendor task from historical
sales, and select $p=20$ products from the
chosen M5 subset whose nonzero demand ratio is at least $0.95$.  This
keeps the task multivariate while avoiding products dominated by zeros,
which would make conditional scenario generation unstable and obscure
the effect of the robust decision layer.

\paragraph{Downstream decision.}
The downstream decision is the same multi-product newsvendor decision
as in Appendix~\ref{app:synthetic}.  Let
$Y_t=(Y_{t,1},\ldots,Y_{t,p})^\top$ denote the realized demand vector
for $p=20$ products on day $t$, and let
$w_t=(w_{t,1},\ldots,w_{t,p})^\top\in\mathbb R_+^p$ denote the order
decision.  The held-out realized loss is
\[
    f(w_t,Y_t)
    =
    \sum_{j=1}^{20}
    c_{u,j}(Y_{t,j}-w_{t,j})_+
    +
    c_{o,j}(w_{t,j}-Y_{t,j})_+ .
\]
For any weighted empirical scenario cloud, the decision is computed by
the per-product weighted critical quantile.

\paragraph{Nominal generator.}
The nominal model is a conditional VAE--LSTM demand generator.  For
each day $t$, the input context $X_t$ is a historical demand window.  A single-layer LSTM encodes this
history into a temporal representation $\widehat H_t$.  During
training, the VAE encoder conditions on $(\widehat H_t,Y_t)$ and
learns an approximate posterior over the latent variable.  At
generation time, the future demand $Y_t$ is not observed; latent
variables are sampled from the learned conditional prior given
$\widehat H_t$, and the decoder produces next-day demand scenarios.
The nominal CVAE--LSTM architecture is illustrated in
Figure~\ref{fig:m5_nominal_cvae}.

\begin{figure}[t]
    \centering
    \includegraphics[width=\textwidth]{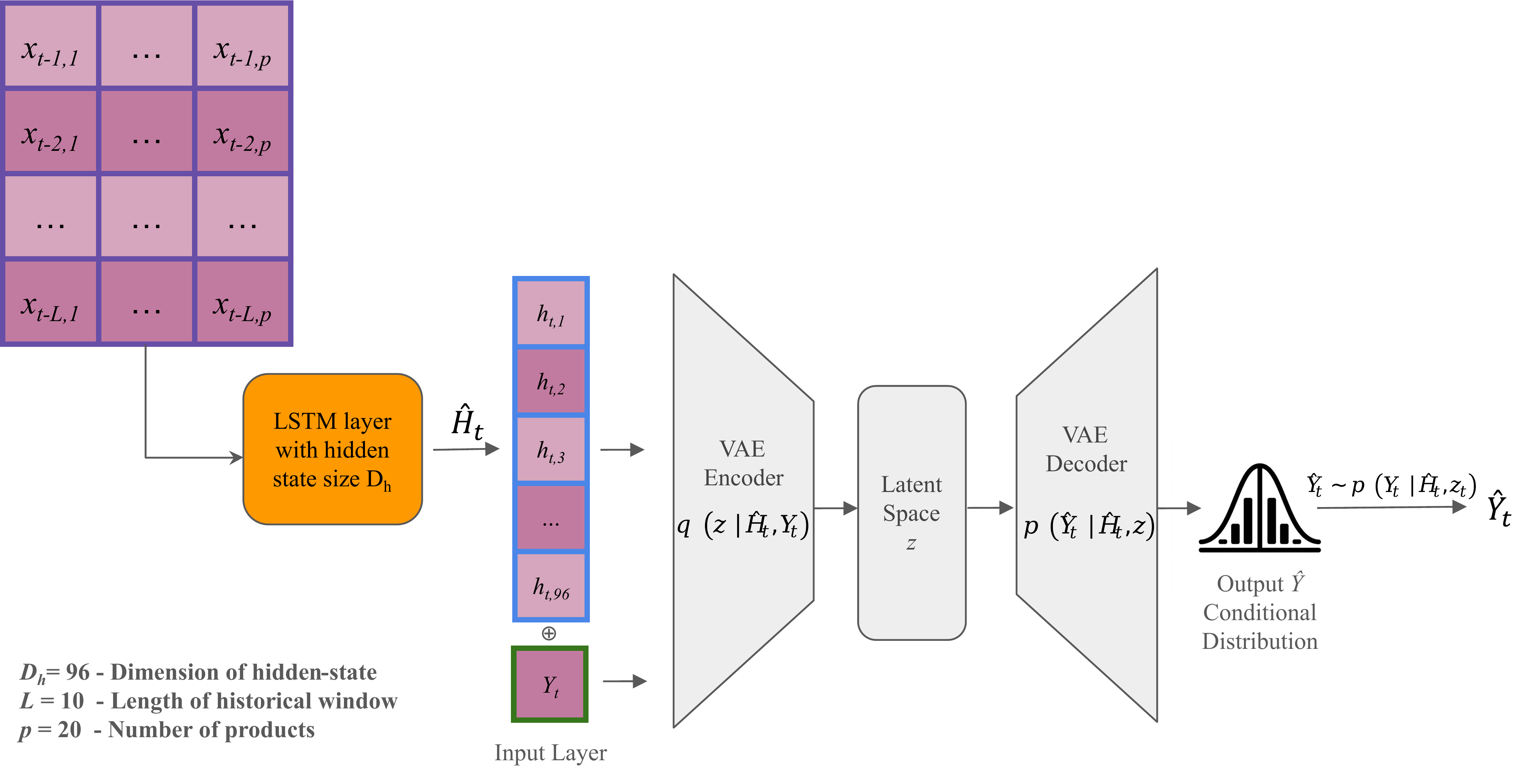}
    \caption{
        Nominal CVAE--LSTM generator for M5 demand scenarios. The LSTM
        encodes the historical demand window into
        $\widehat H_t$.  During training, the encoder uses
        $(\widehat H_t,Y_t)$; during generation, latent variables are
        sampled from the conditional prior given $\widehat H_t$ and
        decoded into next-day demand scenarios.
    }
    \label{fig:m5_nominal_cvae}
\end{figure}

\paragraph{Adaptive robustness.}
Unlike the synthetic benchmark, M5 does not provide a known simulator
mechanism for identifying rare-tail contexts.  We therefore adapt the
robustness level using a risk score computed from the same newsvendor
objective as the downstream task.  For each context $x$, corresponding
to a historical demand window, we compute $r(x)$ as the average
realized newsvendor cost over that window under the same underage and
overage costs used for evaluation.  Let $r_{0.10}$ and $r_{0.90}$ be
the training-set $10\%$ and $90\%$ quantiles of this score, and define
\[
    \alpha(x)
    =
    \left[
    \frac{r(x)-r_{0.10}}{r_{0.90}-r_{0.10}}
    \right]_{[0,1]} .
\]
The KL, W2, and GDRO radii are interpolated between their low- and
high-risk values using $\alpha(x)$.  For methods with adversarial inner
optimization, the inner-loop budget is increased using the same risk
score.  All thresholds, radius endpoints, and optimization budgets are
chosen on validation contexts.  The full M5 configuration is summarized
in Table~\ref{tab:m5-config}.

\begin{table}[t]
\centering
\caption{M5 retail newsvendor configuration.}
\label{tab:m5-config}
\begin{tabularx}{0.9\linewidth}{@{}Xl@{}}
\toprule
Item & Value \\
\midrule
Products & $20$ \\
Historical window length & $10$ days \\
Input-context shape & $10 \times 20$ \\
Forecast horizon & $1$ day \\
Train / validation / test contexts & $1500/100/300$ \\
Nominal generator & Conditional VAE--LSTM \\
LSTM layers & $1$ \\
LSTM hidden-state dimension & $96$ \\
VAE latent dimension & $12$ \\
VAE hidden dimension & $192$ \\
Maximum VAE epochs & $600$ \\
Early-stopping patience & $120$ \\
VAE batch size & $64$ \\
VAE learning rate & $5\times10^{-4}$ \\
VAE weight decay & $10^{-5}$ \\
VAE KL weight & $10^{-3}$ \\
Prior-mean loss weight & $1$ \\
Nominal/KL samples per context & $256$ \\
GDRO/W2/GAS--DRO samples per context & $128$ \\
Underage costs $c_{u,j}$ & $20$ \\
Overage costs $c_{o,j}$ & $2$ \\
KL radius & $[0.14, 1.99]$ \\
W2 radius & $[1.0, 58.0]$ \\
GDRO radius & $[38.0, 75.0]$ \\
GAS--DRO radius & $0.80J_0$ \\
\bottomrule
\end{tabularx}
\end{table}


\subsection{SocialGAN Robot Navigation}
\label{app:socialgan}

\paragraph{Nominal model and data.}
We use the official pretrained SocialGAN pedestrian trajectory model
and the ETH/UCY benchmark protocol from \citet{gupta2018social}.  The
model observes $8$ frames of pedestrian history and generates $12$
future frames.  We use the released SocialGAN predictor and its test
split, without retraining the trajectory generator.

\paragraph{Downstream decision.}
The robot navigation task is added by us on top of the SocialGAN
prediction setting.  At a test context $x$, the robot observes the same
past pedestrian trajectories as SocialGAN.  It then chooses a
$12$-step velocity plan
\[
    w=(u_1,\ldots,u_{12}), \qquad u_t\in\mathbb R^2,
\]
before the true pedestrian futures are revealed.  The robot follows
single-integrator dynamics from a sampled start point to a sampled goal,
with each velocity projected onto a maximum-speed ball.  Optimization
uses generated pedestrian futures from the nominal or robust model;
evaluation uses the true held-out pedestrian futures.

\paragraph{Planning objective and evaluation.}
For a pedestrian-future scenario
$Y=\{y_{j,t}\}_{j=1,t=1}^{N,T}$, the robot decision is a
$T$-step velocity plan
\[
    w=(u_1,\ldots,u_T), \qquad u_t\in\mathbb R^2,\qquad T=12 .
\]
Starting from $s$, the induced robot path is
\[
    r_t(w)=s+\sum_{\ell=1}^t u_\ell,
    \qquad
    \|u_t\|_2\le v_{\max}.
\]
The speed constraint is enforced by projecting each velocity vector
onto the Euclidean ball of radius $v_{\max}$.  Let $g$ be the sampled
goal and let $\bar r_t$ be the straight-line reference path from $s$ to
$g$.  The smooth loss optimized by all methods has the form
\[
\begin{aligned}
    f(w,Y)
    =
    &\underbrace{
    \lambda_p {1\over T}\sum_{t=1}^T \|r_t(w)-g\|_2^2
    + \lambda_T \|r_T(w)-g\|_2^2
    }_{\text{progress and terminal arrival}} \\
    &+
    \underbrace{
    \lambda_{\rm tr}{1\over T}\sum_{t=1}^T
    \|r_t(w)-\bar r_t\|_2^2
    }_{\text{weak reference-path tracking}} \\
    &+
    \underbrace{
    \lambda_u {1\over T}\sum_{t=1}^T \|u_t\|_2^2
    + \lambda_s {1\over T-1}\sum_{t=2}^T
    \|u_t-u_{t-1}\|_2^2
    }_{\text{control effort and smoothness}} \\
    &+
    \underbrace{
    \lambda_c {1\over T}\sum_{t=1}^T
    \operatorname{softplus}
    \left(
    {r_{\rm safe}-\widetilde d_t(w,Y)\over \tau_b}
    \right)^2
    }_{\text{differentiable collision barrier}} .
\end{aligned}
\]
Here
\[
    \widetilde d_t(w,Y)
    =
    -\tau_d
    \log\sum_{j=1}^N
    \exp\left(
    -{\|r_t(w)-y_{j,t}\|_2\over \tau_d}
    \right)
\]
is a soft-min approximation of the nearest-pedestrian distance at time
$t$.  The first line encourages the robot to make progress and arrive
at the goal by the end of the horizon.  The second line weakly
discourages unnecessary detours.  The third line penalizes large and
rapidly changing velocities.  The last line is a smooth safety barrier:
it becomes large when the robot comes within the safety radius
$r_{\rm safe}$ of a sampled pedestrian future.  Nominal, W2, and GDRO
optimize the average of $f(w,Y)$ over their corresponding scenario
clouds, while KL optimizes a weighted average under adversarial sample
weights.

For final evaluation, we use the true held-out pedestrian future and a
hard, non-differentiable score that prioritizes safety and arrival:
\[
\begin{aligned}
    \mathrm{Eval}(w,Y^{\rm true})
    =
    &\;
    C_{\rm col}\mathbf 1\{d_{\min}<r_{\rm col}\}
    + C_{\rm near}\mathbf 1\{d_{\min}<r_{\rm near}\} \\
    &+
    C_{\rm fail}\mathbf 1\{t_{\rm arr}=-1\}
    + c_{\rm time}\widetilde t_{\rm arr}
    + c_{\rm len}L_{\rm path}
    + c_{\rm dev}D_{\rm ref}.
\end{aligned}
\]
Here $d_{\min}$ is the minimum robot--pedestrian distance before
arrival, $t_{\rm arr}$ is the first frame at which
$\|r_t(w)-g\|_2\le r_{\rm arr}$, $\widetilde t_{\rm arr}=t_{\rm arr}$
if the robot arrives and $T+1$ otherwise, $L_{\rm path}$ is path length
until arrival, and $D_{\rm ref}$ is average deviation from the
straight-line reference.  Once the robot reaches the goal, later
pedestrian motion is not counted as a collision.  This hard score is
used only for reporting; optimization uses the smooth loss above.

\paragraph{Fixed task set.}
We uniformly select $50$ contexts from the SocialGAN test split.  For
each selected context, we generate $4$ start--goal queries, giving
$200$ fixed robot tasks.  Queries are selected using only nominal
SocialGAN samples: the straight-line robot path must have nominal
collision probability in $[0.60,0.90]$ under a screening radius $0.30$,
and the start and goal locations must be initially clear of pedestrians.
The evaluation collision radius is $0.50$.  Thus the benchmark focuses
on nontrivial planning instances without using true future trajectories
during task construction.

\paragraph{Adaptive radius and budget.}
For each query, we first solve the nominal robot plan and estimate its
nominal collision probability $\widehat p$ using generated futures and
the evaluation collision radius.  The GDRO and W2 radii are chosen by
the clipped linear rule
\[
    \rho(\widehat p)
    =
    \rho_{\min}
    +
    (\rho_{\max}-\rho_{\min})
    \left[
    \frac{\widehat p-p_{\min}}{p_{\max}-p_{\min}}
    \right]_{[0,1]},
\]
with $\rho_{\min}=0.5$, $\rho_{\max}=3.0$,
$p_{\min}=0.05$, and $p_{\max}=0.50$.  The KL radius is adapted by the
same rule from $0.1$ to $0.7$.  The GDRO optimization budget is also
adapted using $\widehat p$: low-risk queries use the smaller budget in
Table~\ref{tab:socialgan-config}, while high-risk queries use the
larger budget.  All adaptive quantities use only the observed context
and nominal generated futures, never the realized test future.

\begin{table}[t]
\centering
\caption{SocialGAN robot navigation configuration.}
\label{tab:socialgan-config}
\begin{tabular}{ll}
\toprule
Item & Value \\
\midrule
Nominal model & Official SocialGAN \citep{gupta2018social} \\
Dataset protocol & ETH/UCY SocialGAN test split \\
Observed / predicted frames & $8 / 12$ \\
Selected test contexts & $50$ uniformly selected contexts \\
Queries per context & $4$ \\
Total robot tasks & $200$ \\
Robot decision horizon $T$ & $12$ \\
Robot decision $w$ & $12$ two-dimensional velocity controls \\
Robot dynamics & $r_t=s+\sum_{\ell=1}^t u_\ell$ \\
Robot speed limit $v_{\max}$ & $2\times$ reference pedestrian speed \\
Velocity constraint & Projection to $\|u_t\|_2\le v_{\max}$ \\
Straight-line screening probability & $[0.60,0.90]$ \\
Screening collision radius & $0.30$ \\
Evaluation collision radius $r_{\rm col}$ & $0.50$ \\
Near-miss radius $r_{\rm near}$ & $0.80$ \\
Arrival radius $r_{\rm arr}$ & $0.25$ \\
Nominal SocialGAN samples per context & $64$ \\
Nominal plan steps & $600$ \\
GDRO/W2 robust samples per context & $8$ fixed noise samples \\
GDRO/W2 radius range $\rho$ & $[0.5, 3.0]$ \\
KL radius range & $[0.1,0.7]$ \\
Risk anchors $(p_{\min},p_{\max})$ & $(0.05,0.50)$ \\
Low-risk GDRO budget & $150$ outer, $3$ dual cycles, $5$ adversary steps \\
High-risk GDRO budget & $500$ outer, $5$ dual cycles, $8$ adversary steps \\
Controls updates per outer step & $5$ \\
Sinkhorn blur $\varepsilon$ & $0.05$ \\
Loss scale in adversarial update & $25.0$ \\
Progress weight $\lambda_p$ & $1.2$ \\
Terminal arrival weight $\lambda_T$ & $35.0$ \\
Reference tracking weight $\lambda_{\rm tr}$ & $0.05$ \\
Control effort weight $\lambda_u$ & $0.15$ \\
Control smoothness weight $\lambda_s$ & $0.75$ \\
Collision barrier weight $\lambda_c$ & $80.0$ \\
Safety radius in smooth loss $r_{\rm safe}$ & $0.55$ \\
Barrier temperature $\tau_b$ & $0.10$ \\
Soft-min temperature $\tau_d$ & $0.10$ \\
Decision learning rate & $0.035$ \\
Adversarial generator learning rate & $10^{-6}$ \\
Initial dual value / GDRO dual learning rate & $5.0 / 10.0$ \\
Gradient clipping & $1.0$ \\
W2 support-point learning rate & $0.015$ \\
W2 dual learning rate & $2.0$ \\
W2 outer / support / control steps & $180 / 15 / 5$ \\
Evaluation collision penalty $C_{\rm col}$ & $100.0$ \\
Evaluation near-miss penalty $C_{\rm near}$ & $20.0$ \\
Evaluation failure penalty $C_{\rm fail}$ & $50.0$ \\
Evaluation time / path / deviation weights $(c_{\rm time},c_{\rm len},c_{\rm dev})$ & $1.0 / 1.0 / 0.2$ \\
\bottomrule
\end{tabular}
\end{table}

\end{document}